\documentclass{article}
\usepackage{arxiv}
\usepackage{setspace}
\usepackage{microtype}
\usepackage{graphicx}
\graphicspath{{figures/}}
\newlength{\figwidth}
\usepackage{booktabs}
\usepackage[numbers,sort&compress]{natbib}
\definecolor{linkblue}{HTML}{4A6FA5}    % muted steel blue
\usepackage[colorlinks=true,citecolor=linkblue,linkcolor=linkblue,urlcolor=linkblue]{hyperref}

\usepackage{amsmath, amssymb, amsfonts, amsthm}
\usepackage{mathtools}

\usepackage{subcaption}
\usepackage{placeins}
\usepackage{wrapfig}

\usepackage{url}
\usepackage{enumitem}
\usepackage{needspace}
\usepackage{etoolbox}
\AtBeginEnvironment{thebibliography}{\sloppy\raggedright\setlength{\emergencystretch}{2em}}

\usepackage[most]{tcolorbox}

\definecolor{thmback}{HTML}{F0F4F8}    % very light blue-gray

\theoremstyle{plain}
\newtheorem{theorem}{Theorem}

\theoremstyle{definition}

\theoremstyle{remark}

\tcolorboxenvironment{theorem}{
  colback=thmback, colframe=thmback,
  boxrule=0pt, arc=2pt, left=8pt, right=8pt, top=6pt, bottom=6pt,
  before skip=8pt, after skip=8pt, breakable
}
\tcolorboxenvironment{definition}{
  colback=thmback, colframe=thmback,
  boxrule=0pt, arc=2pt, left=8pt, right=8pt, top=6pt, bottom=6pt,
  before skip=8pt, after skip=8pt, breakable
}

\newcommand{\simplex}{\Delta}

\title{{\bfseries When Is Collective Intelligence a Lottery?\\ Multi-Agent Scaling Laws for Memetic Drift in LLMs}}

\author{
  \textbf{Hidenori Tanaka}\\
  CBS--NTT Program in Physics of Intelligence, Center for Brain Science, Harvard University\\
  Physics of Artificial Intelligence Group, NTT Research, Inc.\\
  tanaka@g.harvard.edu
}

\date{}

\begin{document}
\raggedbottom

\maketitle

\begin{abstract}
Multi-agent systems powered by large language models (LLMs) are increasingly deployed in settings that shape consequential decisions, both directly and indirectly. Yet it remains unclear whether their outcomes reflect collective reasoning, systematic bias, or mere chance. Recent work has sharpened this question with naming games, showing that even when no individual agent favors any label \emph{a priori}, populations rapidly break symmetry and reach consensus. Here, we reveal the mechanism by introducing a minimal model, Quantized Simplex Gossip (QSG), and trace the microscopic origin of this agreement to \emph{mutual in-context learning}. In QSG, agents maintain internal belief states but learn from one another's sampled outputs, so one agent's arbitrary choice becomes the next agent's evidence and can compound toward agreement. By analogy with neutral evolution, we call this sampling-driven regime \emph{memetic drift}. QSG predicts a crossover from a drift-dominated regime, where consensus is effectively a lottery, to a selection regime, where weak biases are amplified and shape the outcome. We derive scaling laws for drift-induced polarization as a function of population size, communication bandwidth, in-context adaptation rate, and agents' internal uncertainty, and we validate them in both QSG simulations and naming-game experiments with LLM populations. Together, these results provide a framework for studying the collective mechanisms of social representation formation in multi-agent systems.
\end{abstract}

\section{Introduction}
Collective intelligence arises from interactions among individual agents, but its outcomes often cannot be understood from any one agent alone and instead require principles at the level of the collective \cite{Castellano2009RMP}.
Human language and shared conventions are canonical examples, emerging from repeated exchanges in which ideas are transmitted, modified, and sometimes lost \cite{Steels1995SpatialVocab,Baronchelli2006Sharp,Baronchelli2008NG}.
Recent advances in large language models (LLMs) have made similar collective dynamics relevant beyond human populations. LLM systems are increasingly studied as interacting populations \cite{Chuang2023LLMOpinion,Brockers2025Disentangling,Zhao2024Electoral,Kaesberg2025VotingConsensus,Becker2025MALLM}, and they are also being developed and evaluated for consequential settings including law, finance, healthcare, policy, and scientific discovery \cite{DBLP:journals/corr/abs-2412-11063,shi2024tradingagentsmultiagentsllmfinancial,Sreedhar2025ProsocialPolicy,kim2024mdagentsadaptivecollaborationllms,Gottweis2025AICoScientist}.
As these systems move toward consequential applications, a central question emerges: when an LLM population reaches consensus, does that outcome reflect collective reasoning, systematic bias, or stochastic sampling?

A growing empirical literature studies multi-agent LLM systems through debate, collaboration, and opinion dynamics \cite{Du2024MultiagentDebate,Zhang2024CollaborationMechanisms,Chuang2023LLMOpinion,Brockers2025Disentangling}.
These studies show that LLM populations can exhibit nontrivial collective behavior, but most work focuses on practically relevant interaction settings, so systematic accounts of how microscopic interactions shape collective dynamics remain limited.
Answering the question above therefore requires a controlled setting in which coordination can be isolated from practical details.

One such setting is the naming game, a canonical synthetic task for convention formation \cite{Steels1995SpatialVocab,Baronchelli2006Sharp}. In a naming game, agents repeatedly propose labels for a shared referent and adapt through interaction, so population-level conventions can emerge from local exchanges. Recent work has shown that LLM populations in naming games can spontaneously break symmetry and reach consensus even when no individual agent favors any label \emph{a priori} \cite{Ashery2025SciAdv}, and that population size can strongly shape how deterministic and biased the resulting coordination becomes \cite{Flint2025GroupSize}. Here we isolate a neutral naming-game setting without explicit rewards or ground truth, so any coordination must emerge from interaction-driven in-context learning alone. These results sharpen the problem: if the population begins neutral, what breaks symmetry, and what competing forces drive the population toward consensus? Figure~\ref{fig:symmetry_breaking_main} shows this symmetry breaking in a representative LLM naming game.

\begin{figure}[t]
\centering
\begin{subfigure}[t]{0.32\linewidth}
    \centering
    \includegraphics[width=\linewidth]{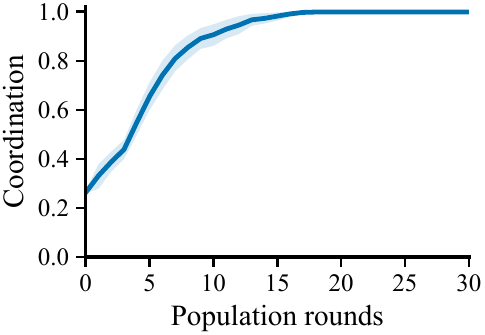}
    \caption{Mean coordination.}
    \label{fig:symmetry_breaking_main_coord}
\end{subfigure}\hfill
\begin{subfigure}[t]{0.31\linewidth}
    \centering
    \includegraphics[width=\linewidth]{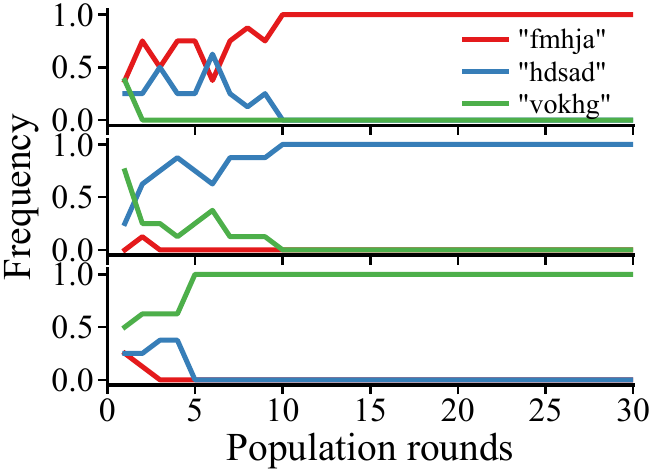}
    \caption{Representative trajectories.}
    \label{fig:symmetry_breaking_main_traj}
\end{subfigure}\hfill
\begin{subfigure}[t]{0.33\linewidth}
    \centering
    \includegraphics[width=\linewidth]{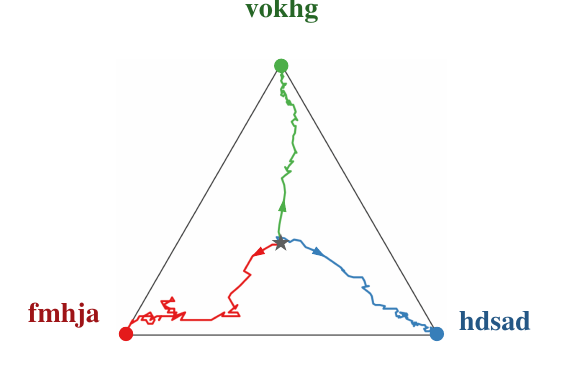}
    \caption{Winner-conditioned simplex paths.}
    \label{fig:symmetry_breaking_main_simplex}
\end{subfigure}
\caption{\textbf{Symmetry breaking in an LLM naming game} (GPT-4o, $N=24$, $K=3$).
(a) Mean coordination across trials (95\% CI).
(b) Representative label-frequency trajectories from three trials, one for each eventual winner.
(c) Mean simplex trajectories conditioned on the eventual winning label.}
\label{fig:symmetry_breaking_main}
\end{figure}

\begin{wrapfigure}[21]{r}{0.39\linewidth}
\vspace{-1\baselineskip}
\centering
\includegraphics[width=0.98\linewidth]{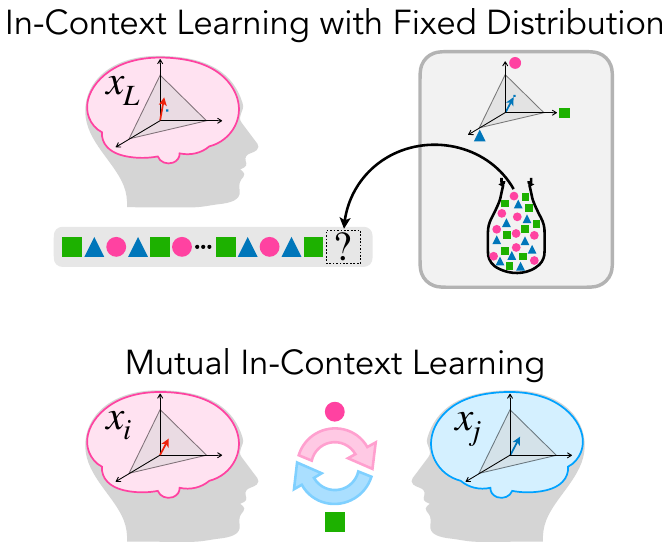}
\caption{\textbf{Individual vs.\ mutual in-context learning.}
Top: standard in-context learning, where a single agent updates from i.i.d.\ tokens drawn from a stationary external distribution.
Bottom: mutual in-context learning, where agents update from one another's sampled outputs, so the population becomes its own evolving data source.}
\label{fig:individual_vs_mutual}
\end{wrapfigure}

The key to understanding this symmetry breaking is that, unlike standard in-context learning, the learning signal in multi-agent coordination is generated by the population itself (Fig.~\ref{fig:individual_vs_mutual}).
In the usual single-agent setting, an agent updates from tokens drawn from a fixed external distribution \cite{Akyurek2022ICL,vonOswald2023ICLGD,Dai2023WhyGPT,Park2025ICLRRepresentations,Park2024CompetitionDynamics}. Here, by contrast, each agent updates from another agent's \emph{sampled output}, which is itself a stochastic draw from a changing internal state.
We refer to this feedback loop as \textbf{mutual in-context learning}: agents learn from one another's sampled outputs as their internal states co-evolve.
This evidence-accumulation perspective is also consistent with recent belief-dynamics accounts of in-context learning \cite{Bigelow2025BeliefDynamics}.
Under mutual in-context learning, an arbitrary early sample can be reused as evidence and amplified into population-wide agreement; in the neutral setting we study, that sampling noise alone is enough to break symmetry and drive consensus.

This perspective naturally connects the problem to evolutionary dynamics, where collective outcomes arise from the balance between selection and stochastic drift \cite{Kimura1968Rate,Kimura1983NeutralTheory,Moran1958RandomProcesses,CliffordSudbury1973SpatialConflict,HolleyLiggett1975VoterModel,Liggett1999Interacting}.
In that language, neutral drift denotes fixation driven by stochastic sampling, while selection denotes systematic asymmetry that favors one outcome over another.
Applying this language to memes, understood here as culturally transmitted units, we call the neutral coordination regime \textbf{memetic drift}. As in population genetics, memetic drift provides a null baseline against which selection can be detected \cite{Kimura1983NeutralTheory,Chotibut2017FluctuatingPop,Kingman1982Coalescent}; without such a baseline, apparent \emph{collective intelligence} may be hard to distinguish from amplified sampling noise.

To study this regime quantitatively, we introduce Quantized Simplex Gossip (QSG), a minimal, analytically tractable model of continuous beliefs, quantized communication, and in-context adaptation.
At a high level, QSG captures a crossover between a drift-dominated regime, where the winner is largely decided by luck, and a selection-dominated regime, where weak biases are collectively amplified.
The location of that crossover depends on population size, communication bandwidth, and adaptation strength. Larger populations and higher-bandwidth communication suppress drift. Stronger adaptation speeds the dynamics overall, but for a fixed weak asymmetry it strengthens drift relative to bias and makes the same bias less decisive.
Recent population-size effects in LLM naming games highlight the need to understand different coordination regimes \cite{Flint2025GroupSize}. Figure~\ref{fig:luckiest_vs_fittest} previews the drift--selection crossover.
Framed this way, the question of whether consensus is a lottery becomes a quantitative scaling question.

\begin{figure}[t]
\centering
\begin{subfigure}[t]{0.245\linewidth}
    \centering
    \includegraphics[width=\linewidth]{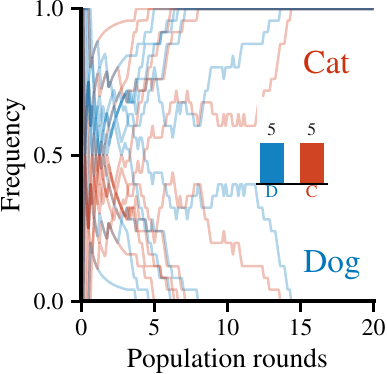}
    \caption{Drift at $N=8$.}
\label{fig:luckiest_vs_fittest_small_n}
\end{subfigure}\hfill
\begin{subfigure}[t]{0.245\linewidth}
    \centering
    \includegraphics[width=\linewidth]{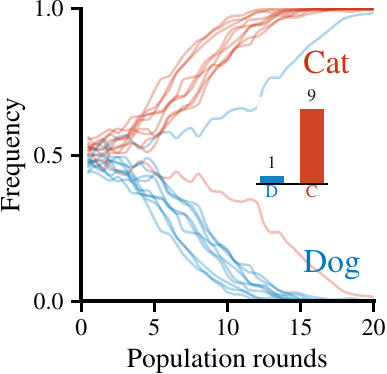}
    \caption{Selection at $N=800$.}
\label{fig:luckiest_vs_fittest_large_n}
\end{subfigure}\hfill
\begin{subfigure}[t]{0.245\linewidth}
    \centering
    \includegraphics[width=\linewidth]{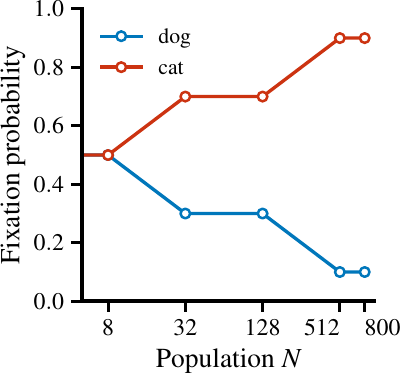}
    \caption{Fixation vs.\ $N$.}
\label{fig:luckiest_vs_fittest_fixation}
\end{subfigure}\hfill
\begin{subfigure}[t]{0.245\linewidth}
    \centering
    \includegraphics[width=\linewidth]{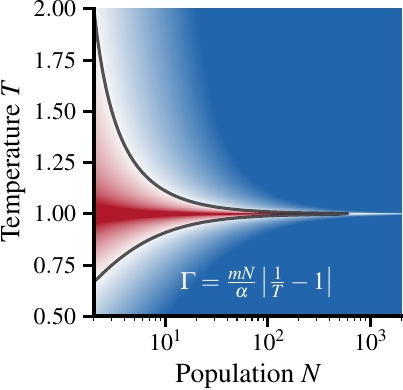}
    \caption{Temperature crossover.}
\label{fig:luckiest_vs_fittest_phase}
\end{subfigure}
\caption{\textbf{Drift--selection crossover in multi-agent naming from GPT-4o experiments and QSG theory.}
Panels (a)--(c) show two-label (Dog, Cat) naming-game experiments with GPT-4o using random ordered speaker--listener pairs for a fixed referent $r$.
(a) At $N=8$, runs show substantial run-to-run variability, and the inset shows winner counts across trials.
(b) At $N=800$, a weak asymmetry consistently selects the same winner (Cat), again reflected in the inset counts.
(c) Fixation probability versus $N$ shows a finite-size crossover.
(d) The corresponding tempered-sampling crossover diagram derived from QSG theory in $(T,mN/\alpha)$; the black curve marks $\Gamma_T=1$, where $\Gamma_T=\frac{mN}{\alpha}\left|\frac{1}{T}-1\right|$.}
\label{fig:luckiest_vs_fittest}
\end{figure}

\Needspace{12\baselineskip}
Specifically, we make the following contributions:
\begin{enumerate}
    \item \textbf{Mutual in-context learning and memetic drift.} We identify the sampling-driven mechanism underlying symmetry breaking and consensus under neutrality in LLM populations, trace it to \emph{mutual in-context learning}, and formalize the resulting regime as \emph{memetic drift}.
    \item \textbf{Quantized Simplex Gossip (QSG).} We introduce QSG, a minimal and tractable model of continuous beliefs, quantized communication, and in-context adaptation for studying multi-agent coordination under neutrality (Sec.~\ref{sec:qsg}).
    \item \textbf{Drift--selection scaling laws.} We show how population size, communication bandwidth, adaptation rate, and internal uncertainty jointly determine whether consensus is dominated by amplified sampling noise or by systematic bias (Sec.~\ref{sec:mechanism}).
    \item \textbf{Empirical validation.} We test these predictions in QSG simulations and naming-game experiments with LLM populations, finding agreement with the predicted scaling laws across multiple regimes (Sec.~\ref{sec:experiments}).
\end{enumerate}

\section{Modeling: Quantized Simplex Gossip (QSG)}
\label{sec:qsg}

Inspired by naming games, we consider \(N\) agents naming a fixed referent \(r\) using a \(K\)-way label set \cite{Steels1995SpatialVocab,Baronchelli2006Sharp}.
Unlike classical naming games, which use deterministic inventory updates \cite{Steels1995SpatialVocab,Baronchelli2006Sharp,Baronchelli2008NG}, QSG represents each agent's belief as a point on the probability simplex, which lets us analyze sampling noise directly.
Each interaction selects an ordered speaker--listener pair, the speaker emits a message, and the listener records the exchange.
Because these episodes are the population's only experience, internal beliefs evolve through them, and coordination arises through in-context learning rather than explicit reward.

The central modeling challenge is the mismatch between continuous internal beliefs and discrete communication.
Even under neutrality, sampling a discrete message from a continuous distribution injects stochasticity that can destabilize symmetric states and drive consensus.
We capture this with \textbf{Quantized Simplex Gossip (QSG)}. Each agent holds a distribution \(x_i\in\Delta^{K-1}\), the speaker sends a quantized message (Hard/Top-$m$/Soft), and the listener updates toward it with adaptation rate \(\alpha\) \cite{Martins2008CODA}.
Figure~\ref{fig:nnd_protocol} schematizes the QSG interaction protocol, with random speaker--listener pairing, quantized communication, and listener adaptation.

\begin{figure}[t]
    \centering
    \includegraphics[width=0.8\linewidth]{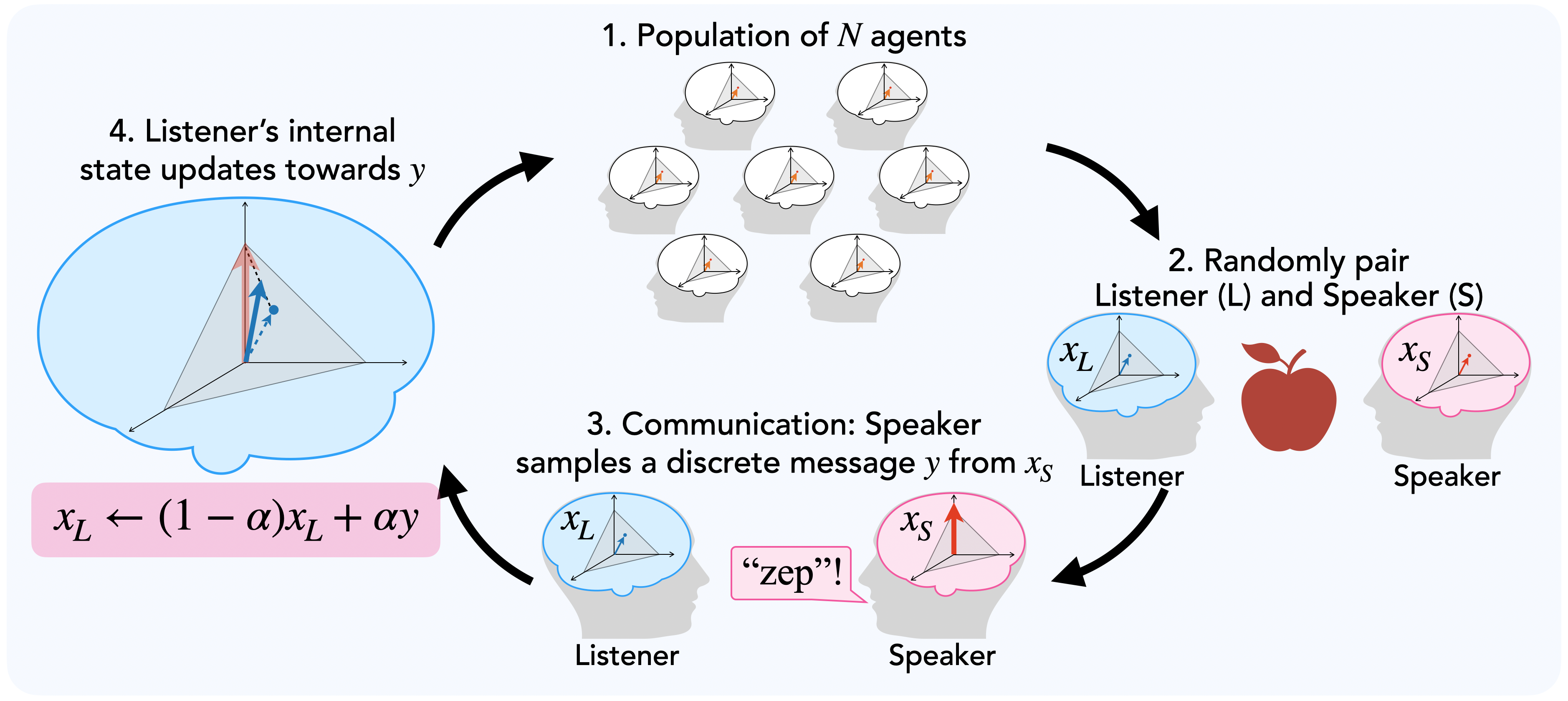}
\caption{\textbf{Quantized Simplex Gossip (QSG): interaction protocol.}
1. A population of \(N\) agents, each with an internal state \(x_i \in \Delta^{K-1}\). 2. An ordered speaker--listener pair is selected uniformly at random. 3. For a fixed referent, the speaker samples and sends a discrete message \(y\) from \(x_S\) in the quantized regimes (Hard or Top-\(m\)); Soft corresponds to transmitting the full distribution. 4. The listener updates toward the received message with adaptation rate \(\alpha\), \(x_L \leftarrow (1-\alpha)x_L + \alpha y\).}
    \label{fig:nnd_protocol}
    \end{figure}

\textbf{Agent state and neutrality.}
Fix \(K\ge2\) conventions and a population of \(N\ge2\) agents.
Agent \(i\in\{1,\dots,N\}\) maintains an internal probability distribution \(x_i\in\Delta^{K-1}\), where
\[
\Delta^{K-1} \triangleq \Bigl\{x\in\mathbb{R}^K:\; x_k\ge 0,\ \sum_{k=1}^K x_k = 1\Bigr\}.
\]
The population state is \(X=(x_1,\dots,x_N)\in(\Delta^{K-1})^N\).
Throughout, token labels are \emph{neutral and exchangeable}: there is no intrinsic fitness difference, external reward signal, or prior bias associated with any convention index \(k\in\{1,\dots,K\}\).

We represent each agent by a continuous distribution \(x_i\) because LLM token generation induces a distribution over discrete tokens, and uncertainty in that distribution is a central control variable.
Communication, however, is tokenized: agents exchange discrete outputs or short summaries, not real-valued internal representations.
This separation between continuous internal state and quantized message is the minimal mechanism that produces endogenous stochastic sampling under neutrality.

\textbf{Quantized communication.}
At each interaction, we sample an \emph{ordered} pair \((S,L)\) of distinct agents uniformly from all \(N(N-1)\) speaker--listener pairs.
The speaker \(S\) samples a message \(y\) from its internal distribution \(x_S\) and transmits it to the listener.
This message is the only communicated object; it is a discrete sample or short list drawn from \(x_S\).
We parameterize the amount of information per interaction by an effective bandwidth \(m\in\{1,2,\dots\}\cup\{\infty\}\).
Let \(\{e_k\}_{k=1}^K\) denote the standard basis in \(\mathbb{R}^K\).
\begin{itemize}
    \item \emph{Hard} (\(m=1\)): sample \(k^\star\sim \mathrm{Cat}(x_S)\), i.e.\ \(\Pr(k^\star = k) = (x_S)_k\), and transmit \(y=e_{k^\star}\).
    \item \emph{Top-\(m\)} (\(m<\infty\)): sample \(k_1,\dots,k_m \overset{\mathrm{iid}}{\sim} \mathrm{Cat}(x_S)\) and transmit the empirical message
    \begin{equation}
    y^{(m)} \;=\; \frac{1}{m}\sum_{j=1}^m e_{k_j}.
    \label{eq:empirical-message}
    \end{equation}
\item \emph{Soft} (\(m=\infty\)): transmit the full distribution \(y=x_S\) deterministically.
\end{itemize}
Under all three regimes, the conditional mean is identical, \(\mathbb{E}[y\mid x_S]=x_S\), while the conditional variance of the message decreases as \(m\) increases (scaling as \(1/m\) in the Top-\(m\) family).
Soft is randomized gossip averaging \cite{Boyd2006Gossip}; Hard resembles a voter/Moran copying process \cite{Moran1958RandomProcesses,CliffordSudbury1973SpatialConflict,HolleyLiggett1975VoterModel,Liggett1999Interacting}.
Top-$m$ is a multi-label / multi-bit variant: messages land on a higher-dimensional face of the simplex rather than a vertex.

The bandwidth parameter \(m\) controls how much evidence the listener receives per interaction: a single discrete choice (one token) or a short transcript providing multiple draws.
Top-\(m\) treats these variants uniformly by controlling message variance while keeping \(\mathbb{E}[y\mid x_S]\) fixed.
Soft (\(m=\infty\)) is an analytic baseline that removes quantization noise; it does not assume literal transmission of hidden states, but serves as an infinite-bandwidth reference against which drift can be identified.

\textbf{Listener update and in-context adaptation.}
After receiving a message \(y\), the listener moves a single step toward it.
Work on in-context learning interprets the forward pass as an implicit online update \cite{Akyurek2022ICL,vonOswald2023ICLGD,Dai2023WhyGPT,Park2025ICLRRepresentations}.
Recent work also suggests that in-context behavior can reflect a mixture of distinct strategies rather than a single monolithic algorithm \cite{Wurgaft2025RationalStrategies}.
Motivated by that view, we model each interaction as a single in-context--style adaptation step with rate \(\alpha\in(0,1]\):
\begin{equation}
x_L' \;=\; (1-\alpha)x_L + \alpha y,
\label{eq:update}
\end{equation}
while the speaker and all other agents remain unchanged.
Equation~\eqref{eq:update} is a minimal abstraction of this viewpoint: the listener performs one adaptation step toward a target distribution encoded by the received message.
The convex-combination form is the simplest contractive update on the simplex that isolates adaptation rate \(\alpha\) while remaining analytically tractable.
In the Soft case, this update reduces to DeGroot-style opinion averaging \cite{DeGroot1974,FriedkinJohnsen1990SocialInfluence} with a simplex-valued opinion $x_i\in\Delta^{K-1}$; in the quantized regimes, it is the corresponding update toward a sampled message rather than a full belief state.
Small \(\alpha\) yields weak per-interaction adaptation (slow accumulation of influence), whereas larger \(\alpha\) amplifies the effect of each sampled message on the listener's belief state.

\paragraph{QSG as a null model: explicit assumptions.}
QSG is defined by the following assumptions, which we later treat as empirically testable approximations:
\begin{enumerate}[leftmargin=1.2em]
\item[A1.] \textbf{Simplex state}
Each agent \(i\) is represented by a probability distribution \(x_i \in \Delta^{K-1}\) over \(K\) competing conventions, corresponding to a fixed naming-game prompt or referent \(r\).
\item[A2.] \textbf{Well-mixed pair selection}
At each step, an ordered speaker--listener pair \((S,L)\) is sampled uniformly from the \(N(N-1)\) pairs with \(S \neq L\).
\item[A3.] \textbf{Quantized message channel}
The speaker communicates a finite-bandwidth message obtained by sampling from \(x_S\) (Hard or Top-\(m\)), rather than transmitting the full distribution.
\item[A4.] \textbf{First-order adaptation} (motivated by in-context learning)
Each interaction induces a single adaptation step of the listener distribution toward the received message, with magnitude controlled by a scalar adaptation rate \(\alpha\).
\end{enumerate}

\textbf{Control parameters and macroscopic observables.}
The core control parameters are \((N,\alpha,m)\), with label count $K$ fixed per experiment.
For later use in the analysis, we also track the population mean \(\bar{x}\), the polarization \(U \coloneqq \|\bar{x}\|_2^2\), and the disagreement energy \(V \coloneqq \sum_i\|x_i-\bar{x}\|_2^2\).
Table~\ref{tab:qsg-controls} summarizes this notation.

\begin{table}[t]
\centering
\small
\setlength{\tabcolsep}{6pt}
\begin{minipage}[t]{0.48\linewidth}
\centering
\textbf{(a) Control parameters}\\[4pt]
\begin{tabular}{@{}ll@{}}
\toprule
Symbol & Meaning \\
\midrule
\(N\) & population size \\
\(\alpha\in(0,1]\) & adaptation rate \\
\(m\in\{1,\dots,\infty\}\) & communication bandwidth \\
\bottomrule
\end{tabular}
\end{minipage}%
\hfill
\begin{minipage}[t]{0.48\linewidth}
\centering
\textbf{(b) Macroscopic observables}\\[4pt]
\begin{tabular}{@{}ll@{}}
\toprule
Symbol & Meaning \\
\midrule
\(\bar{x}\in\Delta^{K-1}\) & population mean \\
\(U=\|\bar{x}\|_2^2\) & polarization \\
\(V=\sum_i\|x_i{-}\bar{x}\|_2^2\) & disagreement energy \\
\bottomrule
\end{tabular}
\end{minipage}
\caption{\textbf{QSG control parameters and macroscopic observables.} Each agent holds an internal state $x_i \in \Delta^{K-1}$ over $K$ labels; the left panel lists the control parameters, and the right panel lists the macroscopic observables used in the drift analysis (Sec.~\ref{sec:mechanism}).}
\label{tab:qsg-controls}
\end{table}

\section{Analysis and Scaling Laws}
\label{sec:mechanism}

We analyze QSG at the population level to ask when coordination is shaped mainly by selection versus stochastic drift, using macroscopic observables defined directly from the population state.
The drift mechanism we isolate is not mutually exclusive with selection, but adds to any systematic biases.

\subsection{Macroscopic observables}
We analyze the QSG dynamics directly on the simplex using the update rules in Sec.~\ref{sec:qsg} (Eqs.~\eqref{eq:empirical-message} and \eqref{eq:update}).
This lets us track how sampling variance perturbs the mean dynamics near symmetry.
The primary order parameter is the population mean distribution, analogous to magnetization in statistical mechanics.
\begin{equation}
\bar{x} \coloneqq \frac{1}{N}\sum_{i=1}^N x_i \in \simplex^{K-1}.
\end{equation}
From $\bar{x}$ we define the \textbf{polarization} (squared magnetization).
\begin{equation}
U \coloneqq \|\bar{x}\|_2^2 \in \left[\tfrac{1}{K}, 1\right],
\label{eq:U_def}
\end{equation}
which equals $1/K$ at perfect symmetry and $1$ at consensus.
It measures collective alignment regardless of which label ultimately wins.
We also track a diagnostic for agent heterogeneity, the \textbf{disagreement energy}.
\begin{equation}
V \coloneqq \sum_{i=1}^N\|x_i-\bar{x}\|_2^2 \;\ge\; 0.
\label{eq:V_def}
\end{equation}
It measures how dispersed agents remain around the population mean.
At the perfectly symmetric initialization, sampling noise is the only driver of motion; more generally, drift and selection can coexist, and these observables isolate the drift component.
Figure~\ref{fig:concept} visualizes the simplex geometry and the sampling-noise--driven drift strength that underlies the analysis.

\begin{figure}[t]
\centering
\begin{minipage}[t]{0.30\linewidth}
\centering
\includegraphics[width=\linewidth]{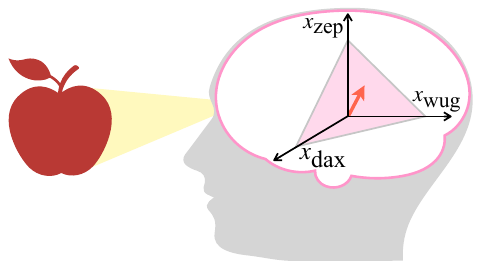}
\par\smallskip
\small (a) Probabilistic naming.
\end{minipage}\hfill
\begin{minipage}[t]{0.35\linewidth}
\centering
\includegraphics[width=\linewidth]{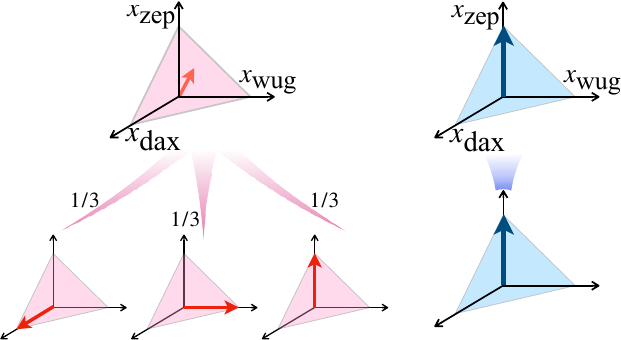}
\par\smallskip
\small (b) Sampling noise from quantization.
\end{minipage}\hfill
\begin{minipage}[t]{0.30\linewidth}
\centering
\includegraphics[width=\linewidth]{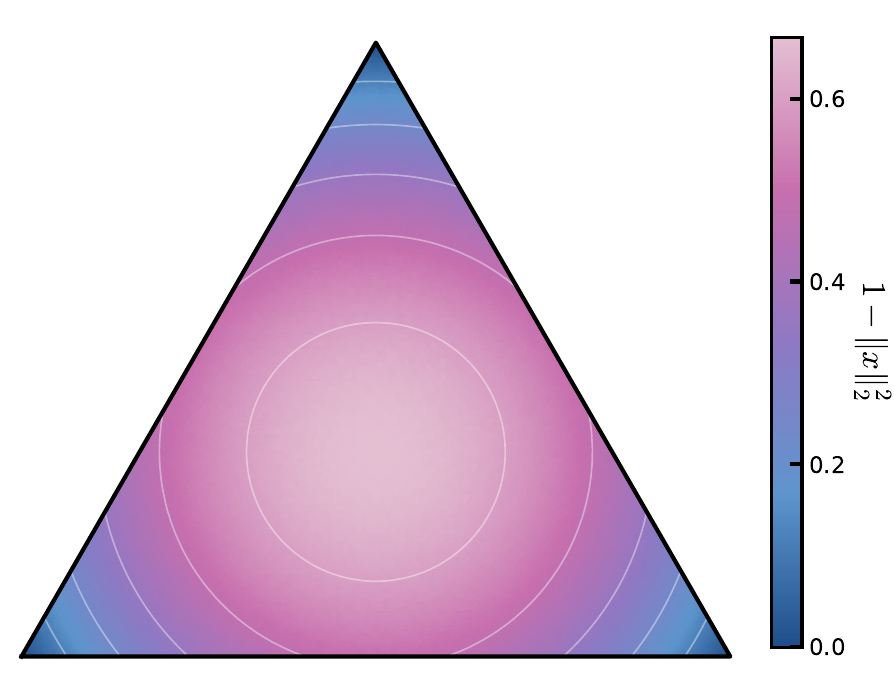}
\par\smallskip
\small (c) Drift strength on the simplex.
\end{minipage}
\caption{\textbf{Probabilistic naming and sampling-noise geometry.}
(a) A referent is represented internally as a distribution over candidate labels.
(b) Agent states lie on the simplex: near-uniform, high-entropy states generate larger sampling noise under quantization, whereas peaked, low-entropy states generate less.
(c) Sampling-driven drift strength on the simplex, proportional to $1-\|x\|_2^2$, is maximal near the center and vanishes at the vertices.}
\label{fig:concept}
\end{figure}

\subsection{Soft exchange preserves the mean in expectation and contracts disagreement}
In Soft QSG, the speaker transmits $y=x_S$ and only the listener updates,
$x_L'=(1-\alpha)x_L+\alpha x_S$.
Thus the population mean $\bar{x}\coloneqq \frac{1}{N}\sum_{i=1}^N x_i$ evolves as
\begin{equation}
\bar{x}'=\bar{x}+\frac{\alpha}{N}(x_S-x_L).
\label{eq:mean_update_soft}
\end{equation}
Conditioned on the current state $X=(x_1,\dots,x_N)$ and the uniform random choice of ordered pair $(S,L)$ with $S\neq L$,
\begin{equation}
\mathbb{E}[\bar{x}'\mid X]=\bar{x},
\end{equation}
so $\bar{x}$ is preserved in expectation (equivalently, each coordinate $(\bar{x}_k(t))$ is a bounded martingale), but it is \emph{not} generally invariant along individual trajectories.
Moreover, the disagreement energy $V\coloneqq \sum_{i=1}^N\|x_i-\bar{x}\|_2^2$ contracts in expectation:
\begin{equation}
\mathbb{E}[\Delta V\mid X]
=
-\frac{2\alpha}{N-1}\left(1-\alpha+\frac{\alpha}{N}\right)V
\;\le\;0.
\label{eq:soft_V_contraction}
\end{equation}
Hence, if $x_i(0)=\mathbf{1}/K$ for all $i$, then $x_i(t)=\mathbf{1}/K$ for all $t$ (no symmetry breaking).
This provides the neutral baseline. Full-distribution exchange smooths disagreement but does not create spontaneous convention formation under neutrality.
Against this baseline, any symmetry breaking in the quantized regimes must come from the extra sampling variance injected by communication itself.

\subsection{Hard sampling injects an extra variance term}
Hard and Soft share the \emph{same conditional mean} (since $\mathbb{E}[e_{k^\star}\mid x_S]=x_S$), but Hard injects additional sampling variance.

\begin{theorem}[Hard sampling increases polarization via sampling variance]
\label{thm:hard_variance}
Consider QSG with adaptation rate $\alpha\in(0,1]$.
Let $\bar{x}$ be the population mean and define the polarization potential $U\coloneqq \|\bar{x}\|_2^2$.
Conditioned on the current state $X$, the expected one-step change in $U$ satisfies
\begin{align}
\mathbb{E}[\Delta U \mid X]_{\mathrm{hard}}
&= \mathbb{E}[\Delta U \mid X]_{\mathrm{soft}} \nonumber\\
&\quad + \frac{\alpha^2}{N^2}\,\mathbb{E}\big[1-\|x_S\|_2^2 \mid X\big], \label{eq:extra_variance}
\end{align}
where the expectation is over the random choice of $(S,L)$ and (for Hard) the sample $k^\star$.
The additional term in \eqref{eq:extra_variance} is the sampling variance and is strictly positive whenever $x_S$ is not one-hot.
In particular, at the perfectly symmetric initialization $x_i=\mathbf{1}/K$, we have $\mathbb{E}[\Delta U \mid X]_{\mathrm{soft}}=0$ but
\begin{equation}
\mathbb{E}[\Delta U \mid X]_{\mathrm{hard}} = \frac{\alpha^2}{N^2}\left(1-\frac{1}{K}\right) > 0,
\end{equation}
so symmetry is noise-unstable under Hard sampling.
\end{theorem}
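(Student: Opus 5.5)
The plan is a bias--variance decomposition of the one-step change in $U$, done conditionally on the ordered pair $(S,L)$; averaging over the uniform pair choice comes last. In either regime only the listener moves, so the mean update is
\[
\bar{x}'=\bar{x}+\frac{\alpha}{N}(y-x_L),
\]
which generalizes \eqref{eq:mean_update_soft}. Expanding the square gives
\[
\Delta U=\|\bar{x}'\|_2^2-\|\bar{x}\|_2^2=\frac{2\alpha}{N}\langle \bar{x},y-x_L\rangle+\frac{\alpha^2}{N^2}\|y-x_L\|_2^2 .
\]
I will compare the conditional expectation of this expression under Hard ($y=e_{k^\star}$) and Soft ($y=x_S$), holding $X$ and $(S,L)$ fixed.

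\textbf{Step 1: the linear term is identical in the two regimes.} Since $\mathbb{E}[e_{k^\star}\mid x_S]=x_S$, the term $\frac{2\alpha}{N}\langle\bar{x},y-x_L\rangle$ is linear in $y$. Its conditional expectation is therefore the same under Hard and Soft.

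\textbf{Step 2: the quadratic term differs by exactly the message variance.} Write $y-x_L=(y-x_S)+(x_S-x_L)$. The cross term vanishes in expectation because $\mathbb{E}[y-x_S\mid x_S]=0$. This leaves
\[
\mathbb{E}\|y-x_L\|_2^2=\|x_S-x_L\|_2^2+\mathbb{E}\|y-x_S\|_2^2 ,
\]
and the first piece is precisely the Soft quadratic term. For the second piece I use $\|e_k\|_2^2=1$:
\[
\mathbb{E}\|e_{k^\star}-x_S\|_2^2=\mathbb{E}\|e_{k^\star}\|_2^2-\|x_S\|_2^2=1-\|x_S\|_2^2 .
\]
Averaging over the uniformly chosen pair $(S,L)$ then yields \eqref{eq:extra_variance}. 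The added term is positive whenever $x_S$ is not one-hot: in that case $x_S$ has at least two positive coordinates, so $\sum_k (x_S)_k^2<\sum_k (x_S)_k=1$.

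\textbf{Step 3: evaluate at the symmetric initialization.} If $x_i=\mathbf{1}/K$ for all $i$, then in the Soft regime $x_S-x_L=0$, so $\Delta U=0$ deterministically. This is also consistent with the preceding subsection, where the symmetric state was shown to be a fixed point. For Hard, $\|x_S\|_2^2=1/K$, so the extra term equals $\frac{\alpha^2}{N^2}(1-1/K)>0$.

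I expect no real obstacle here. The only point needing care is the bookkeeping in Step 2: the variance decomposition must be carried out conditionally on $(S,L)$, because $x_L$ is fixed given the pair and is independent of the sample $k^\star$. That independence is what makes the cross term vanish; only afterwards is the average over pairs taken. The same argument extends immediately to Top-$m$. There the extra term becomes $\frac{\alpha^2}{Nm^2}\cdot$ replaced by $\frac{\alpha^2}{N^2 m}\mathbb{E}[1-\|x_S\|_2^2\mid X]$, since the variance of the empirical mean of $m$ i.i.d.\ draws scales as $1/m$.
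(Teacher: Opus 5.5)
Your proof is correct and follows the paper's argument. It uses the same expansion $\Delta U=\frac{2\alpha}{N}\langle\bar{x},y-x_L\rangle+\frac{\alpha^2}{N^2}\|y-x_L\|_2^2$ and the same split $y-x_L=(y-x_S)+(x_S-x_L)$, with the cross term vanishing conditionally on the pair. The one cosmetic difference is the linear term: the paper shows it vanishes via $\mathbb{E}[y\mid X]=\bar{x}=\mathbb{E}[x_L\mid X]$, while you only note it is the same in both regimes, which is enough for the Hard--Soft comparison.
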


\noindent\textit{Proof.} See Appendix~\ref{app:theorem1_proof}.

This variance injection drives symmetry breaking in the neutral model under symmetric initialization and adds to any selection effects.
To see what Hard adds beyond Soft, note that under Soft exchange the symmetric state $\bar{x}=\mathbf{1}/K$ is a fixed point and drift arises only from heterogeneity ($x_S\neq x_L$), so $U$ grows only insofar as agents disagree.
Hard/Top-$m$ adds variance even when $x_S=x_L$, so the symmetric interior state becomes stochastically unstable and trajectories are pushed toward consensus.
Equivalently, $\mathbb{E}[\Delta U \mid X]_{\mathrm{soft}}=\frac{2\alpha^2}{N^2(N-1)}V$, while Hard adds the positive term in Eq.~\eqref{eq:extra_variance}.

\paragraph{Mean-field approximation and consensus time.}
Under a mean-field ansatz where agents remain similar ($x_i \approx p$ for all $i$), we have $\|x_S\|_2^2 \approx U$, yielding the mean-field ODE (approximating the expected trajectory)
\begin{equation}
\frac{dU}{dt} = \frac{\alpha^2}{mN^2}(1-U).
\label{eq:meanfield_ode}
\end{equation}
Starting from the symmetric state $U(0)=1/K$, the solution is
\begin{equation}
U(t) = 1 - \left(1-\frac{1}{K}\right)\exp\!\left(-\frac{\alpha^2 t}{mN^2}\right),
\label{eq:meanfield_solution}
\end{equation}
with characteristic time $t_{\mathrm{char}}\sim mN^2/\alpha^2$ interaction steps, or equivalently $\tau_{\mathrm{char}}\sim mN/\alpha^2$ population rounds ($\tau=t/N$), with Hard corresponding to $m=1$.
Mean-field predicts an approximate single-parameter collapse of $U(t)$ across $N$ when time is rescaled by $mN^2/\alpha^2$.
For a fixed consensus threshold $U_\star\in(1/K,1)$, the mean-field hitting time satisfies
\begin{equation}
t_{\mathrm{cons}}(U_\star)
\;\approx\;
\frac{mN^2}{\alpha^2}\log\!\left(\frac{1-1/K}{1-U_\star}\right).
\label{eq:tcons_scaling}
\end{equation}
The logarithmic factor depends only on $(K,U_\star)$, so $t_{\mathrm{cons}}\propto N^2$ in steps (equivalently $\tau_{\mathrm{cons}}=t_{\mathrm{cons}}/N \propto N$ in population rounds).
Physically, larger populations and higher-bandwidth messages weaken the stochastic push of any single interaction, whereas stronger in-context adaptation accelerates the approach to consensus.
Figure~\ref{fig:meanfield_validation} compares simulated Hard trajectories to the mean-field curve; trajectories track the mean-field prediction, consistent with the variance-injection mechanism in Theorem~\ref{thm:hard_variance}.

\begin{figure*}[t]
\centering
\begin{subfigure}[t]{0.30\linewidth}
\centering
\includegraphics[width=\linewidth]{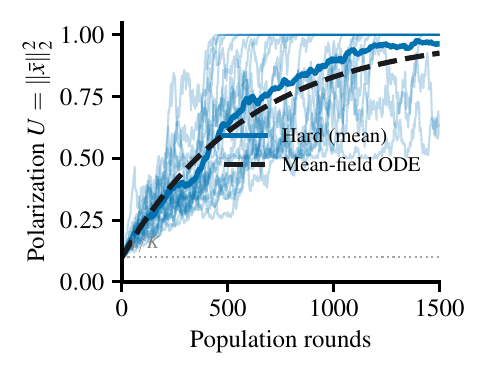}
\caption{Mean-field trajectories.}
\label{fig:meanfield_validation}
\end{subfigure}
\hfill
\begin{subfigure}[t]{0.34\linewidth}
\centering
\includegraphics[width=\linewidth]{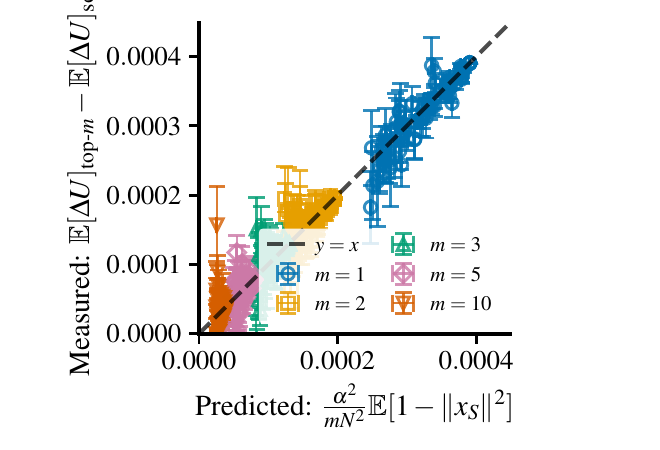}
\caption{Measured vs.\ predicted drift.}
\label{fig:thm2_validation}
\end{subfigure}
\hfill
\begin{subfigure}[t]{0.30\linewidth}
\centering
\includegraphics[width=\linewidth]{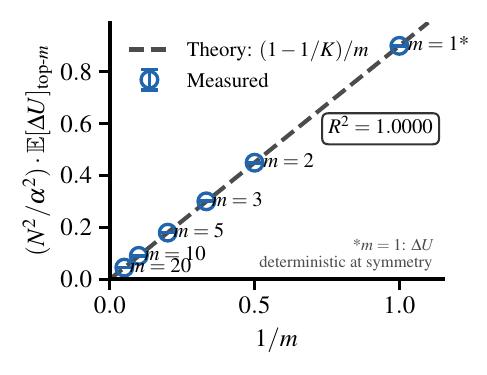}
\caption{$1/m$ scaling at symmetry.}
\end{subfigure}
\caption{\textbf{QSG simulations validate the mean-field approximation and drift identities.}
(a) Hard-sampling trajectories and their ensemble mean track the mean-field solution in Eq.~\eqref{eq:meanfield_solution} ($N=24$, $K=10$, $\alpha=0.2$).
(b) One-step excess drift from shared snapshot states $X$, plotted against the predicted variance-injection term; dashed reference line $y=x$ ($N=24$, $K=10$, $\alpha=0.5$).
(c) Symmetric $1/m$ corollary. Under $x_i=\mathbf{1}/K$, the normalized excess drift follows the theoretical line $(1-1/K)(1/m)$ ($N=24$, $K=10$, $\alpha=0.5$).}
\label{fig:qsg_simulations}
\end{figure*}

\subsection{Top-$m$ reduces the symmetry-breaking drift as $1/m$}
To quantify bandwidth, we model the candidate list as $m$ i.i.d.\ samples from $x_S$ and transmit their empirical distribution.
This preserves the mean but reduces sampling variance by $1/m$.

The variance of the empirical message follows from the Top-$m$ construction. Let
$k_1,\dots,k_m \overset{\mathrm{iid}}{\sim} \mathrm{Cat}(x_S)$ and define
$y^{(m)} \coloneqq \frac{1}{m}\sum_{j=1}^m e_{k_j}$.
Then $\mathbb{E}[y^{(m)}\mid x_S]=x_S$ and
\begin{equation}
\mathrm{Cov}\!\left(y^{(m)} \mid x_S\right) = \frac{1}{m}\left(\mathrm{diag}(x_S) - x_S x_S^\top\right).
\end{equation}
In particular, $\mathbb{E}\big[\|y^{(m)}-x_S\|_2^2 \mid x_S\big] = \frac{1}{m}\left(1-\|x_S\|_2^2\right)$, i.e., variance scales as $1/m$.

\begin{theorem}[Top-$m$ drift term scales as $1/m$]
\label{thm:topm_scaling}
Consequently, under QSG with Top-$m$ empirical communication $y^{(m)}$, the polarization drift satisfies
\begin{equation}
\begin{aligned}
\mathbb{E}[\Delta U \mid X]_{\mathrm{topm}}
&= \mathbb{E}[\Delta U \mid X]_{\mathrm{soft}}\\
&\quad + \frac{\alpha^2}{m\,N^2}\,\mathbb{E}\big[1-\|x_S\|_2^2 \mid X\big].
\end{aligned}
\label{eq:topm_drift}
\end{equation}
At the perfectly symmetric initialization $x_i=\mathbf{1}/K$, this yields
$\mathbb{E}[\Delta U \mid X]_{\mathrm{topm}} = \frac{\alpha^2}{mN^2}\left(1-\frac{1}{K}\right)$.
\end{theorem}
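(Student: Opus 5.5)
The plan is to repeat the bias--variance decomposition behind Theorem~\ref{thm:hard_variance}, with the one-hot message $e_{k^\star}$ replaced by the empirical message $y^{(m)}$. The only new input is the conditional second moment of $y^{(m)}$ computed just above.

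\textbf{Step 1: one-step change of $U$.} Since only the listener moves, $\bar{x}'=\bar{x}+\frac{\alpha}{N}(y-x_L)$. Expanding the square gives
\[
\Delta U=\frac{2\alpha}{N}\langle\bar{x},y-x_L\rangle+\frac{\alpha^2}{N^2}\|y-x_L\|_2^2 .
\]

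\textbf{Step 2: condition on the pair and average over the message.} Fix the ordered pair $(S,L)$ and the state $X$, and take the expectation over the $m$ i.i.d.\ draws only. Because $\mathbb{E}[y^{(m)}\mid x_S]=x_S$, the linear term equals its Soft counterpart $\frac{2\alpha}{N}\langle\bar{x},x_S-x_L\rangle$. For the quadratic term, write $y-x_L=(y-x_S)+(x_S-x_L)$. The cross term vanishes by unbiasedness, leaving
\[
\mathbb{E}\big[\|y-x_L\|_2^2\mid X,S,L\big]=\|x_S-x_L\|_2^2+\mathbb{E}\big[\|y^{(m)}-x_S\|_2^2\mid x_S\big].
\]
The first piece is exactly the Soft quadratic term. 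By the trace of the stated covariance, the second equals $\frac{1}{m}(1-\|x_S\|_2^2)$; here one uses $\sum_k x_{S,k}=1$, so $\operatorname{tr}(\mathrm{diag}(x_S)-x_Sx_S^\top)=1-\|x_S\|_2^2$.

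\textbf{Step 3: average over pairs and specialize to symmetry.} Average over the uniform ordered pair to obtain \eqref{eq:topm_drift}. The Soft part reassembles into $\mathbb{E}[\Delta U\mid X]_{\mathrm{soft}}$, and the excess is $\frac{\alpha^2}{mN^2}\mathbb{E}[1-\|x_S\|_2^2\mid X]$. At $x_i=\mathbf{1}/K$ for every $i$, all Soft terms vanish because $x_S=x_L$. Moreover $\|x_S\|_2^2=1/K$ deterministically, which gives $\frac{\alpha^2}{mN^2}(1-1/K)$.

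There is no real obstacle here. The only point needing care is that the $\frac{1}{m}$ variance identity requires the $k_j$ to be i.i.d.\ draws, as specified in the Top-$m$ definition, and not draws without replacement. The covariance formula $\frac{1}{m}(\mathrm{diag}(x_S)-x_Sx_S^\top)$ then follows from independence and the one-hot covariance of a single categorical draw. As a consistency check, setting $m=1$ recovers Theorem~\ref{thm:hard_variance}.
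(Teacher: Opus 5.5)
Your proposal is correct and follows essentially the same route as the paper. You expand $\Delta U$ from $\bar{x}'=\bar{x}+\frac{\alpha}{N}(y^{(m)}-x_L)$, split $y^{(m)}-x_L=(y^{(m)}-x_S)+(x_S-x_L)$ so the cross term vanishes by unbiasedness, and evaluate the excess via $\mathbb{E}\|y^{(m)}-x_S\|_2^2=\frac{1}{m}(1-\|x_S\|_2^2)$. The only cosmetic difference is that the paper first removes the linear term using $\mathbb{E}[y^{(m)}\mid X]=\bar{x}=\mathbb{E}[x_L\mid X]$, whereas you match it pair-by-pair to its Soft counterpart; that is equally valid.
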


Thus, increasing bandwidth weakens symmetry-breaking drift linearly in $1/m$.
\noindent\textit{Proof.} See Appendix~\ref{app:theorem2_proof}.

\begin{wrapfigure}[15]{r}{0.40\linewidth}
\vspace{-1.5\baselineskip}
\centering
\includegraphics[width=0.86\linewidth]{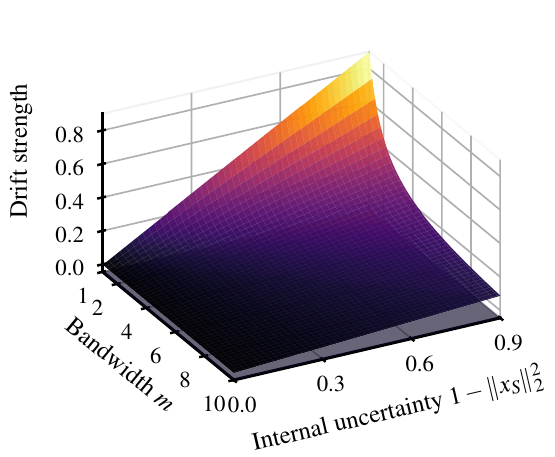}
\caption{\textbf{Sampling-driven drift from uncertainty and bandwidth.}
For fixed $\alpha$ and $N$, the drift term in Eq.~\eqref{eq:topm_drift} scales as $(1-\|x_S\|_2^2)/m$, increasing with uncertainty and decreasing with bandwidth $m$.}
\label{fig:entropic_force}
%\vspace{-0.15\baselineskip}
\end{wrapfigure}

Figure~\ref{fig:entropic_force} visualizes how the drift term depends on uncertainty and bandwidth, and Figure~\ref{fig:thm2_validation} tests Eq.~\eqref{eq:topm_drift} directly by estimating one-step conditional expectations from shared snapshot states and comparing the measured excess drift to the predicted variance-injection term (Appendix~\ref{app:experiments} summarizes the estimator and parameters). 
Panel (c) of Fig.~\ref{fig:qsg_simulations} shows the corresponding $1/m$ scaling at the symmetric initialization.

\subsection{Absorption at $\alpha=1$ and run-level symmetry breaking for $\alpha<1$}
For $\alpha=1$ (pure copying), after each agent has served as a listener at least once, all $x_i$ are one-hot and Hard QSG reduces to a finite-state copying Markov chain with absorbing consensus states $\{x_i=e_k,\ \forall i\}$.
For $\alpha<1$ the state space is continuous, and we do not claim finite-time consensus or almost-sure absorption. Our formal results in this regime are expectation-level drift identities and mean-field approximations. In simulations, the process tends to polarize and empirically select a winner in each run even though ensemble averages preserve symmetry.
This is the discrete analogue of the drift mechanism above. Randomness can still select a winner even when the ensemble is neutral, but for $\alpha<1$ we treat this as an empirical run-level phenomenon rather than a proved absorption result.
For $\alpha<1$, Eq.~\eqref{eq:logistic_tanh_biased} should be read as a diffusion-approximation description of run-level fixation behavior.
In the hard-copying limit on a complete interaction graph, this reduction becomes a classical voter-model statement \cite{CliffordSudbury1973SpatialConflict,HolleyLiggett1975VoterModel}. The process almost surely reaches a consensus vertex in finite time. If the initialization is exchangeable across token labels, for example $x_i(0)=\mathbf{1}/K$, the winning token is uniformly distributed over $\{1,\dots,K\}$. More generally, the probability that token $k$ wins equals $\bar{x}_k(0)$, by the martingale property of the density process. Appendix~\ref{app:winner_symmetry_proof} gives the reduction and proof details.
This neutral winner symmetry is the reference point that the weak asymmetries below perturb into the drift--selection crossover.

\subsection{Weak asymmetry and the drift--selection crossover ($K=2$)}
\label{sec:weak_asymmetry}

To model weak asymmetry, we tilt only the speaker channel by a small sampling bias (external field) $h$.
For $K=2$, let $p_i:=x_{i1}$ and sample messages from $\tilde p_S=\frac{p_S e^{h}}{p_S e^{h}+(1-p_S)}$ (listener update unchanged).
A diffusion approximation (Appendix~\ref{app:weak_asymmetry_diffusion}) collapses fixation statistics onto the single parameter $\Gamma_h\equiv \frac{mN}{\alpha}h$.
This parameter measures the competition between systematic bias and endogenous drift. Larger $N$ or $m$ suppress the neutral sampling noise and make the same bias more decisive, whereas larger $\alpha$ strengthens drift relative to that same $h$.
Defining the final magnetization $M_\infty \coloneqq 2\bar{x}_1(\infty)-1$,
\begin{equation}
\begin{aligned}
\Pr(\text{label 1 fixes})&\approx\frac{1}{1+\exp(-\Gamma_h)},\\
\mathbb{E}[M_\infty]&\approx 2\Pr(\text{label 1 fixes})-1,\\
N_c&\sim \frac{\alpha}{m|h|}.
\end{aligned}
\label{eq:logistic_tanh_biased}
\end{equation}
Thus fixation is approximately logistic in $\Gamma_h$, with crossover scale $N_c\sim \alpha/(m|h|)$ (from $|\Gamma_h|\sim 1$).
This crossover is previewed in Fig.~\ref{fig:luckiest_vs_fittest}.
We use $\Gamma_h$ to distinguish this bias-based crossover from the temperature-based $\Gamma_T$ in Appendix~\ref{app:Gamma}.
Consequently, $|\Gamma_h|\ll 1$ yields near-neutral winners (near $1/2$), while $|\Gamma_h|\gg 1$ yields bias-driven amplification of the asymmetry.

These results point to a single mechanism linking population size, bandwidth, adaptation strength, and internal uncertainty. Quantized communication injects sampling variance, and that variance controls both the speed of neutral consensus formation and the extent to which weak biases are amplified. That mechanism yields testable scalings, including $1/N^2$ and $1/m$ early drift and a mean-field consensus time, which we evaluate next in LLM populations.

\section{Experimental Validation}
\label{sec:experiments}

We test the QSG scaling predictions in LLM populations using the Neutral Naming Drift (NND) protocol. In each run, $N$ agents repeatedly name a fixed referent $r$ using $K$ neutral synthetic labels, with no external reward or ground truth.
Interactions are ordered speaker--listener updates. In the $N$-sweep, the speaker emits one label, whereas in the separate Top-$m$ sweep the speaker emits exactly $m$ labels. In both cases only the listener updates, so agents respond from their own memory rather than from a partner's current output.

Empirical observables are estimated from probe outputs that are used only for measurement and are never incorporated into memory.
At each probe time, we estimate the population-average label distribution $p$ from sampled agent outputs, yielding the empirical counterpart of the theoretical population mean $\bar{x}$ defined in Sec.~\ref{sec:mechanism}. In the Top-$m$ sweep, repeated probe draws support a direct estimate of $U$, whereas in the $N$-sweep the plotted squared-frequency statistic is a finite-sample proxy. We therefore focus on ensemble scaling trends rather than exact pointwise agreement.
We estimate early drift as $\Delta U/\text{step}$, and for each trial define time to consensus as the first probe where $U \ge U_\star$ (here $U_\star=0.9$). For scaling plots, we aggregate this quantity over trials that also finish above the same threshold at the run horizon.
Because $p$ is estimated from finite samples, $U$ is itself noisy, so we report cross-trial variability rather than over-interpreting individual runs.
Appendix~\ref{app:llm_population} details the probe cadence, sampling budget, decoding controls, and prompt templates.
Figure~\ref{fig:theory_vs_llm} summarizes the LLM scaling tests.

\begin{figure}[t]
\centering
\begin{subfigure}[t]{0.323\linewidth}
\centering
\includegraphics[width=\linewidth]{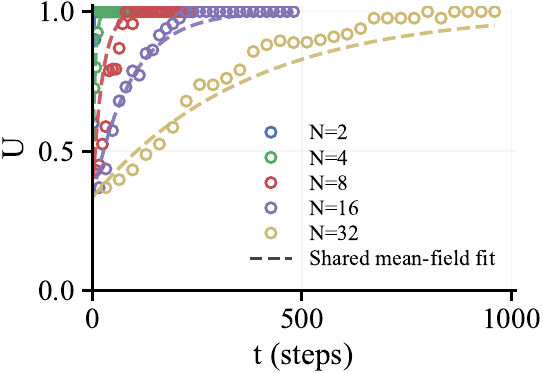}
\caption{GPT-4o $U(t)$.}
\label{fig:theory_vs_llm_gpt_traj}
\end{subfigure}
\hfill
\begin{subfigure}[t]{0.323\linewidth}
\centering
\includegraphics[width=\linewidth]{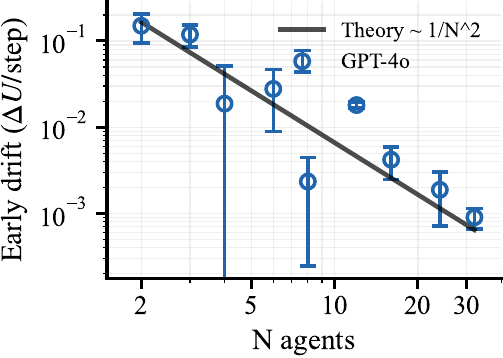}
\caption{GPT-4o drift.}
\label{fig:theory_vs_llm_gpt_drift}
\end{subfigure}
\hfill
\begin{subfigure}[t]{0.323\linewidth}
\centering
\includegraphics[width=\linewidth]{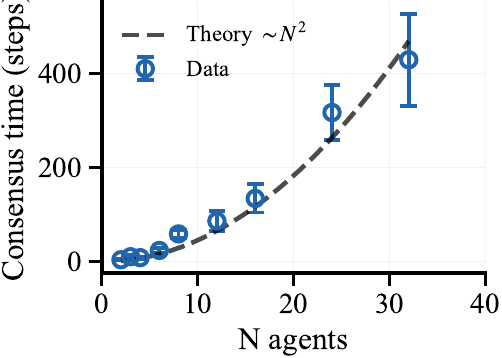}
\caption{GPT-4o $t_{\mathrm{cons}}$.}
\label{fig:theory_vs_llm_gpt_tcons}
\end{subfigure}
\par\vspace{2pt}
\begin{subfigure}[t]{0.323\linewidth}
\centering
\includegraphics[width=\linewidth]{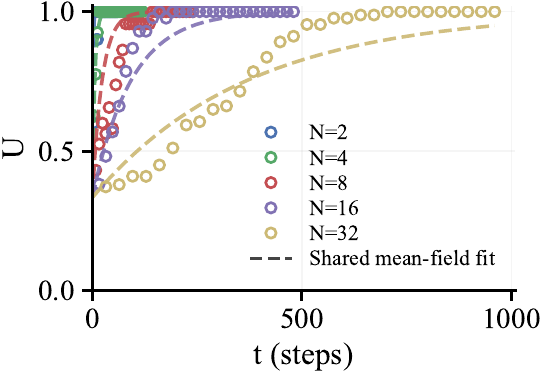}
\caption{Claude Haiku 4.5 $U(t)$.}
\label{fig:theory_vs_llm_claude_traj}
\end{subfigure}
\hfill
\begin{subfigure}[t]{0.323\linewidth}
\centering
\includegraphics[width=\linewidth]{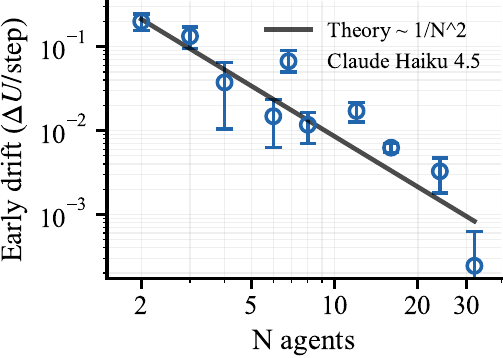}
\caption{Claude Haiku 4.5 drift.}
\label{fig:theory_vs_llm_claude_drift}
\end{subfigure}
\hfill
\begin{subfigure}[t]{0.323\linewidth}
\centering
\includegraphics[width=\linewidth]{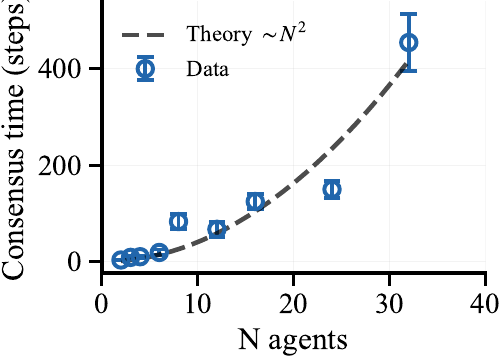}
\caption{Claude Haiku 4.5 $t_{\mathrm{cons}}$.}
\label{fig:theory_vs_llm_claude_tcons}
\end{subfigure}
\caption{\textbf{LLM scaling tests of QSG predictions.}
Top row: GPT-4o; bottom row: Claude Haiku 4.5.
Panels (a,d) show $K=3$ polarization trajectories well captured by a single effective-\(\alpha\) mean-field fit shared across $N$; panels (b,e) show early drift $\Delta U/\mathrm{step}$ decreasing near the predicted $1/N^2$ law; and panels (c,f) show time to consensus increasing close to dashed $cN^2$ fits.}
\label{fig:theory_vs_llm}
\end{figure}

Figure~\ref{fig:theory_vs_llm} shows strong scaling-level agreement between the QSG predictions and the LLM experiments in both GPT-4o and Claude Haiku 4.5.
Panels~\subref{fig:theory_vs_llm_gpt_traj} and \subref{fig:theory_vs_llm_claude_traj} show that the raw (non-rescaled) $U(t)$ trajectories for $K=3$ are captured well by the mean-field form
$U(t)=1-(1-1/K)\exp(-\alpha^2 t/(mN^2))$ when fit with a \emph{single} effective $\alpha$ shared across $N$.
Panels~\subref{fig:theory_vs_llm_gpt_drift} and \subref{fig:theory_vs_llm_claude_drift} show that the early-time drift decreases with $N$ close to the predicted $1/N^2$ law from the variance-injection term in Theorem~\ref{thm:hard_variance}. Panels~\subref{fig:theory_vs_llm_gpt_tcons} and \subref{fig:theory_vs_llm_claude_tcons} show that time to consensus in total interactions (steps) grows close to quadratically in $N$, again matching the mean-field prediction $t_{\mathrm{cons}}\sim mN^2/\alpha^2$ (Eq.~\eqref{eq:tcons_scaling}). In population rounds $\tau=t/N$, the same scaling is linear in $N$.
Taken together, the same variance-injection mechanism that organizes the QSG theory accounts for the main empirical patterns in both LLM families.

For the Top-$m$ setting, the LLM protocol mirrors Eq.~\eqref{eq:empirical-message}.
\par\noindent
\begin{wrapfigure}[16]{r}{0.39\linewidth}
\vspace{-1.1\baselineskip}
\centering
\includegraphics[width=0.94\linewidth]{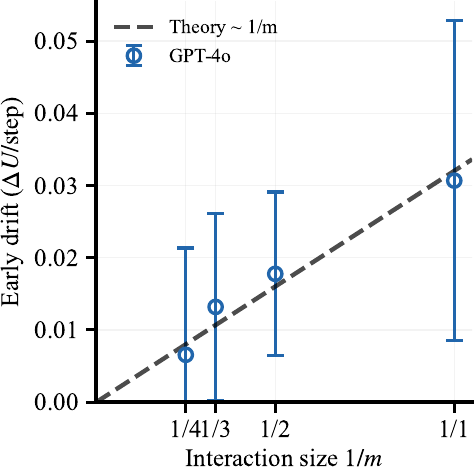}
\caption{\textbf{Top-$m$ scaling in GPT-4o.}
Early-round drift follows the predicted $1/m$ law.}
\label{fig:m_sweep_llm_scaling}
\end{wrapfigure}
The speaker emits $m$ labels, repeats are allowed, and the transmitted list is treated as an unranked multiset; Appendix~\ref{app:topm_llm_details} gives the full protocol.
Figure~\ref{fig:m_sweep_llm_scaling} shows that in GPT-4o the Top-$m$ sweep follows the predicted $1/m$ trend. Across the $N$-sweep and the Top-$m$ sweep, and across two LLM families, QSG captures the main trends in the coordination dynamics.

\FloatBarrier
\section{Discussion and Conclusion}
QSG is deliberately minimal, with the aim of isolating the essential dynamics of mutual in-context learning at the population level rather than reproducing every practical detail of interacting LLM populations. Despite its simplicity, the model captures how quantized communication injects sampling noise and how repeated interaction can amplify that noise into convention formation. This picture leads to the insight that consensus can arise even when no individual agent prefers any output \emph{a priori}, while weak asymmetries can be collectively amplified enough to overcome drift once the system crosses into the selection regime. Agreement in an LLM population is therefore not, on its own, evidence of collective reasoning or information aggregation; it can also be the consequence of memetic drift.

More broadly, the paper also suggests how multi-agent LLM systems can be mechanistically understood in ways that matter for alignment. There is by now a large empirical literature on multi-agent LLM systems, but most of it characterizes collective behavior without theoretically isolating the mechanisms that produce it. By pairing a synthetic task such as the naming game with a minimal analytically tractable model, we can move beyond description and derive scaling laws from first principles, state them as quantitative predictions, and test them in controlled LLM experiments. The scaling laws for $N$, $m$, and $\alpha$ reported here, and the drift--selection crossover they predict, follow from the variance-injection mechanism in QSG and are broadly consistent with both simulations and experiments with LLM populations. This result suggests that synthetic games combined with minimal models can open a productive route toward physics-style analysis in multi-agent LLM systems.

Within that broader methodological view, the framework also suggests a population-level counterpart to mechanistic interpretability. Instead of asking only how a single model forms and uses internal representations through training or in-context learning, we can ask how collective representations emerge, stabilize, and reorganize through social interaction, whether by converging, fragmenting, polarizing, or otherwise becoming distorted. From a safety perspective, this also means that harmful collective representations may form through interaction, echoing concerns about representation formation and bias that are already familiar from work on single models. One possible failure mode is the population-level amplification of sycophancy, in which strategically injected or socially reinforced signals steer a group toward distorted conventions. This in turn raises a broader question for future work about whether alignment at the level of individual agents composes under social interaction, or whether a society of individually aligned agents can still produce misaligned collective outcomes. Addressing that question will require extending the analysis to structured networks, heterogeneous agents, and training-data priors that act as selection forces, so that we can better understand what a population converges on and why, not just whether it coordinates.

QSG, the minimal theory developed here, should be understood in the spirit of the ideal gas law in thermodynamics, setting a bare-bones baseline for multi-agent coordination. More broadly, we see this work as a step toward a physics of social representation formation in interacting model populations. Richer theories can then build on minimal models such as QSG toward a statistical mechanics of interacting agents.

\bibliographystyle{unsrtnat}
\bibliography{references}

%%%%%%%%%%%%%%%%%%%%%%%%%%%%%%%%%%%%%%%%%%%%%%%%%%%%%%%%%%%%%%%%%%%%%%%%%%%%
% SUPPLEMENTARY MATERIAL: DETAILED EXPERIMENT DESCRIPTION
%%%%%%%%%%%%%%%%%%%%%%%%%%%%%%%%%%%%%%%%%%%%%%%%%%%%%%%%%%%%%%%%%%%%%%%%%%%%
\clearpage
\appendix
%%%%%%%%%%%%%%%%%%%%%%%%%%%%%%%%%%%%%%%%%%%%%%%%%%%%%%%%%%%%%%%%%%%%%%%%%%%%
% APPENDIX A: ADDITIONAL THEORY DETAILS
%%%%%%%%%%%%%%%%%%%%%%%%%%%%%%%%%%%%%%%%%%%%%%%%%%%%%%%%%%%%%%%%%%%%%%%%%%%%
\section{Additional Theory for Quantized Simplex Gossip (QSG)}
\label{app:qsg_theory}

This appendix collects algebraic identities and full drift calculations used in Sec.~\ref{sec:mechanism}.

\subsection{Proof of Theorem~\ref{thm:hard_variance}: hard-sampling drift decomposition}
\label{app:theorem1_proof}

Let only the listener update: $x_L'=(1-\alpha)x_L+\alpha y$, where $y=x_S$ (Soft) or $y=e_{k^\star}$ (Hard).
Then $\bar{x}' = \bar{x} + \frac{\alpha}{N}(y-x_L)$.
Expanding $U'=\|\bar{x}'\|_2^2$ gives
$\Delta U=\frac{2\alpha}{N}\langle \bar{x}, y-x_L\rangle + \frac{\alpha^2}{N^2}\|y-x_L\|_2^2$.
Taking conditional expectations yields
$\mathbb{E}[\Delta U \mid X]=\frac{\alpha^2}{N^2}\mathbb{E}\big[\|y-x_L\|_2^2 \mid X\big]$
because unbiased messaging and uniform ordered-pair sampling give
$\mathbb{E}[y\mid X]=\bar{x}=\mathbb{E}[x_L\mid X]$.
For Hard, decompose $y-x_L=(y-x_S)+(x_S-x_L)$.
Since $\mathbb{E}[y-x_S\mid x_S]=0$, the cross term vanishes after conditioning on $(x_S,x_L)$, and
$\mathbb{E}\big[\langle y-x_S,x_S-x_L\rangle \mid x_S,x_L\big]=0$.
Hence
\[
\mathbb{E}\big[\|y-x_L\|_2^2 \mid X\big]
=
\mathbb{E}\big[\|x_S-x_L\|_2^2 \mid X\big]
+
\mathbb{E}\big[\|y-x_S\|_2^2 \mid X\big].
\]
For one-hot $y=e_{k^\star}$ sampled from $x_S$, $\mathbb{E}\|y-x_S\|_2^2 = 1-\|x_S\|_2^2$.
Substituting these two terms gives the Hard variance-injection formula in Theorem~\ref{thm:hard_variance}.

\subsection[Proof of Top-m scaling theorem]{Proof of Theorem~\ref{thm:topm_scaling}: Top-$m$ drift decomposition and scaling}
\label{app:theorem2_proof}

The same expansion applies with $y$ replaced by the Top-$m$ empirical message $y^{(m)}$.
We have $x_L'=(1-\alpha)x_L+\alpha y^{(m)}$ and hence
$\bar{x}' = \bar{x} + \frac{\alpha}{N}(y^{(m)}-x_L)$, so expanding $U'=\|\bar{x}'\|_2^2$ gives
$\Delta U=\frac{2\alpha}{N}\langle \bar{x}, y^{(m)}-x_L\rangle + \frac{\alpha^2}{N^2}\|y^{(m)}-x_L\|_2^2$.
Taking conditional expectations yields
$\mathbb{E}[\Delta U \mid X]=\frac{\alpha^2}{N^2}\mathbb{E}\big[\|y^{(m)}-x_L\|_2^2 \mid X\big]$
because unbiased messaging and uniform ordered-pair sampling give
$\mathbb{E}[y^{(m)}\mid X]=\bar{x}=\mathbb{E}[x_L\mid X]$.
Decompose $y^{(m)}-x_L=(y^{(m)}-x_S)+(x_S-x_L)$ to obtain the Soft term plus an extra variance term
$\frac{\alpha^2}{N^2}\mathbb{E}\|y^{(m)}-x_S\|_2^2$.
More explicitly, conditioning on $(x_S,x_L)$ gives
\[
\mathbb{E}\big[\langle y^{(m)}-x_S,x_S-x_L\rangle \mid x_S,x_L\big]=0,
\]
so
\[
\mathbb{E}\big[\|y^{(m)}-x_L\|_2^2 \mid X\big]
=
\mathbb{E}\big[\|x_S-x_L\|_2^2 \mid X\big]
+
\mathbb{E}\big[\|y^{(m)}-x_S\|_2^2 \mid X\big].
\]
By \eqref{eq:empirical_var_norm_app},
$\mathbb{E}\|y^{(m)}-x_S\|_2^2=\frac{1}{m}(1-\|x_S\|_2^2)$, yielding \eqref{eq:topm_drift}.

\subsection{Voter-model reduction and winner symmetry}
\label{app:winner_symmetry_proof}

With $\alpha=1$, the listener is overwritten by the sampled one-hot message: $x_L' = e_{k^\star}$ with $k^\star\sim\mathrm{Cat}(x_S)$. Under uniform random pair selection on the complete interaction graph, each agent is chosen as a listener infinitely often; hence after an almost-surely finite "coupon collector" time, all $x_i$ lie on simplex vertices and remain there. From that time onward the dynamics is exactly the $K$-state voter/Moran copying process on a finite complete graph, which almost surely reaches a consensus (an absorbing vertex state) in finite time.

For the winner distribution, fix a token $k$ and define the population mean coordinate $\bar{x}_k(t)\coloneqq \frac{1}{N}\sum_{i=1}^N x_{i,k}(t)$. In one update,
$\bar{x}_k(t+1)=\bar{x}_k(t)+\frac{1}{N}\big(y_k-x_{L,k}(t)\big)$ where $y=e_{k^\star}$.
Conditioned on the current state $\mathcal{F}_t$, $\mathbb{E}[y_k\mid\mathcal{F}_t]=\mathbb{E}[x_{S,k}(t)\mid\mathcal{F}_t]=\bar{x}_k(t)$ and also $\mathbb{E}[x_{L,k}(t)\mid\mathcal{F}_t]=\bar{x}_k(t)$, so $\mathbb{E}[\bar{x}_k(t+1)\mid\mathcal{F}_t]=\bar{x}_k(t)$; the expected value of $\bar{x}_k$ is unchanged at each step (equivalently, $(\bar{x}_k(t))$ is a bounded martingale). Let $T_{\text{cons}}$ be the (a.s.\ finite) consensus time; then $\bar{x}_k(T_{\text{cons}})=\mathbf{1}\{\text{winner}=k\}$. By optional stopping for bounded martingales, $\Pr[\text{winner}=k]=\mathbb{E}[\bar{x}_k(T_{\text{cons}})]=\bar{x}_k(0)$. If the initialization is exchangeable across labels (e.g.\ $x_i(0)=\mathbf{1}/K$), then $\bar{x}_k(0)=1/K$ and the winner is uniform over $\{1,\dots,K\}$.

\subsection{Diffusion approximation for weak asymmetry}
\label{app:weak_asymmetry_diffusion}
For $K=2$, let $\bar p=\frac{1}{N}\sum_i p_i$ with $p_i=x_{i1}$.
Under a small bias $h$, expand $\tilde p_S = p_S + h\,p_S(1-p_S)+\mathcal{O}(h^2)$.
Combining this with the QSG update and a mean-field approximation yields a one-dimensional diffusion for $\bar p$ in population rounds $\tau=t/N$,
\begin{equation}
d\bar p \;=\; \mu(\bar p)\,d\tau + \sqrt{D(\bar p)}\,dW,
\end{equation}
with drift $\mu(\bar p)\propto \alpha\,h\,\bar p(1-\bar p)$ and diffusion $D(\bar p)\propto (\alpha^2/(mN))\,\bar p(1-\bar p)$. Equivalently, per interaction step the mean increment scales as $\mathbb{E}[\Delta\bar p]\propto (\alpha/N)\,h\,\bar p(1-\bar p)$.
The fixation probability $\pi(p_0)$ of this diffusion surrogate solves the backward equation
$\mu(\bar p)\pi'(\bar p)+\tfrac12 D(\bar p)\pi''(\bar p)=0$ with $\pi(0)=0$, $\pi(1)=1$,
giving $\pi(p_0)=\frac{1-\exp(-2\Gamma_h p_0)}{1-\exp(-2\Gamma_h)}$ where $\Gamma_h=\frac{mN}{\alpha}h$.
From $p_0=1/2$, this yields the approximate logistic/tanh form used in Eq.~\eqref{eq:logistic_tanh_biased}.

\subsection{Order parameters and second-moment identities}
\label{app:order_parameters}

Recall the population mean
$
\bar{x} \coloneqq \frac{1}{N}\sum_{i=1}^N x_i
$
and polarization potential
$
U \coloneqq \|\bar{x}\|_2^2.
$
Define the disagreement energy
\begin{equation}
V \;\coloneqq\; \sum_{i=1}^N \|x_i-\bar{x}\|_2^2,
\label{eq:V_def_app}
\end{equation}
and the mean self-overlap
\begin{equation}
q \;\coloneqq\; \frac{1}{N}\sum_{i=1}^N \|x_i\|_2^2.
\label{eq:q_def_app}
\end{equation}
Finally, we define the coordination rate
\begin{equation}
S \;\coloneqq\; \frac{1}{N(N-1)}\sum_{i\neq j} x_i^\top x_j.
\label{eq:S_def_app}
\end{equation}

For any population state $X=(x_1,\dots,x_N)$, the basic second-moment identities are
\begin{align}
V &= N\,(q-U),
\label{eq:V_q_U_app}
\\
S &= U - \frac{V}{N(N-1)}.
\label{eq:S_U_V_app}
\end{align}
Expand $V=\sum_i \|x_i-\bar{x}\|_2^2=\sum_i\|x_i\|_2^2 - N\|\bar{x}\|_2^2$, giving $V=N(q-U)$.
Also $\|\sum_i x_i\|_2^2 = \sum_i\|x_i\|_2^2+\sum_{i\neq j}x_i^\top x_j$.
Since $\|\sum_i x_i\|_2^2=N^2U$ and $\sum_i\|x_i\|_2^2 = V+NU$, we obtain
$\sum_{i\neq j}x_i^\top x_j = N(N-1)U - V$, hence \eqref{eq:S_U_V_app}.

For a uniformly random ordered pair $(S,L)$ with $S\neq L$,
\begin{equation}
\mathbb{E}\big[\|x_S-x_L\|_2^2 \mid X\big] = \frac{2}{N-1}\,V.
\label{eq:pairdist_app}
\end{equation}
Because the diagonal terms vanish,
\[
\sum_{i\neq j}\|x_i-x_j\|_2^2
=
\sum_{i,j}\|x_i-x_j\|_2^2.
\]
Expanding the right-hand side around the mean gives
$\sum_{i,j}\|x_i-x_j\|_2^2 = 2N\sum_i\|x_i-\bar{x}\|_2^2 = 2NV$.
Averaging over the $N(N-1)$ ordered pairs gives \eqref{eq:pairdist_app}.

\subsection{Mean update and preservation in expectation}
\label{app:mean_martingale}

For any regime in which $\mathbb{E}[y\mid x_S]=x_S$ (Soft/Hard/Top-$m$),
\begin{equation}
\bar{x}'=\bar{x}+\frac{\alpha}{N}(y-x_L).
\label{eq:mean_update_app}
\end{equation}
Conditioned on the current state $X$,
\begin{equation}
\mathbb{E}[\bar{x}'\mid X]=\bar{x},
\end{equation}
so each coordinate $(\bar{x}_k(t))_{t\ge 0}$ is preserved in expectation from one step to the next (equivalently, it is a bounded martingale).
Only the listener changes: $x_L'-x_L=\alpha(y-x_L)$, so $\bar{x}'=\bar{x}+\frac{1}{N}(x_L'-x_L)$, yielding \eqref{eq:mean_update_app}.
Now condition on $X$. Since $(S,L)$ is a uniformly random ordered pair with $S\neq L$,
$\mathbb{E}[x_S\mid X]=\mathbb{E}[x_L\mid X]=\bar{x}$.
If $\mathbb{E}[y\mid x_S]=x_S$, then $\mathbb{E}[y\mid X]=\bar{x}$.
Thus $\mathbb{E}[y-x_L\mid X]=0$, implying $\mathbb{E}[\bar{x}'\mid X]=\bar{x}$.

\subsection[Message variance for Top-m]{Message variance for Top-$m$}
\label{app:topm_variance}

Recall the empirical message $y^{(m)}=\frac{1}{m}\sum_{j=1}^m e_{k_j}$ with $k_j\overset{\mathrm{iid}}{\sim}\mathrm{Cat}(x_S)$.

Conditioned on $x_S$,
\begin{align}
\mathbb{E}[y^{(m)}\mid x_S] &= x_S,
\\
\mathrm{Cov}(y^{(m)}\mid x_S) &= \frac{1}{m}\left(\mathrm{diag}(x_S)-x_Sx_S^\top\right),
\\
\mathbb{E}\big[\|y^{(m)}-x_S\|_2^2 \mid x_S\big] &= \frac{1}{m}\left(1-\|x_S\|_2^2\right).
\label{eq:empirical_var_norm_app}
\end{align}
Write $y^{(m)}=\frac{1}{m}\sum_{j=1}^m Y_j$ with $Y_j=e_{k_j}$ i.i.d.
Then $\mathbb{E}[Y_j\mid x_S]=x_S$ and $\mathrm{Cov}(y^{(m)}\mid x_S)=\frac{1}{m}\mathrm{Cov}(Y_1\mid x_S)$.
Since $\mathbb{E}[Y_1Y_1^\top\mid x_S]=\mathrm{diag}(x_S)$,
$\mathrm{Cov}(Y_1\mid x_S)=\mathrm{diag}(x_S)-x_Sx_S^\top$.
Taking trace gives \eqref{eq:empirical_var_norm_app}.

\subsection{Polarization drift: full derivation of the variance-injection law}
\label{app:U_drift_proof}

Let $U=\|\bar{x}\|_2^2$. Assume unbiased messaging $\mathbb{E}[y\mid x_S]=x_S$ (true for Soft/Hard/Top-$m$).
For the listener update $x_L'=(1-\alpha)x_L+\alpha y$,
\begin{equation}
\mathbb{E}[\Delta U \mid X] = \frac{\alpha^2}{N^2}\,\mathbb{E}\big[\|y-x_L\|_2^2 \mid X\big].
\label{eq:U_step_identity_app}
\end{equation}
From \eqref{eq:mean_update_app}, $\bar{x}'=\bar{x}+\frac{\alpha}{N}(y-x_L)$.
Expand $\|\bar{x}'\|_2^2-\|\bar{x}\|_2^2 = \frac{2\alpha}{N}\langle \bar{x},y-x_L\rangle + \frac{\alpha^2}{N^2}\|y-x_L\|_2^2$.
Conditioned on $X$, unbiased messaging and uniform ordered-pair sampling give
$\mathbb{E}[y\mid X]=\bar{x}=\mathbb{E}[x_L\mid X]$, so $\mathbb{E}[y-x_L\mid X]=0$ and the linear term vanishes.

\smallskip
\noindent\textit{Soft exchange.}
Under Soft exchange ($y=x_S$),
\begin{equation}
\mathbb{E}[\Delta U \mid X]_{\mathrm{soft}}
=
\frac{\alpha^2}{N^2}\mathbb{E}\big[\|x_S-x_L\|_2^2 \mid X\big]
=
\frac{2\alpha^2}{N^2(N-1)}\,V.
\label{eq:U_soft_V_app}
\end{equation}
This is \eqref{eq:U_step_identity_app} with \eqref{eq:pairdist_app} substituted for
$\mathbb{E}\big[\|x_S-x_L\|_2^2 \mid X\big]$.

Under Top-$m$ empirical communication $y=y^{(m)}$,
\begin{equation}
\mathbb{E}[\Delta U \mid X]_{\mathrm{top}m}
=
\mathbb{E}[\Delta U \mid X]_{\mathrm{soft}}
+
\frac{\alpha^2}{mN^2}\,\mathbb{E}\!\left[1-\|x_S\|_2^2 \mid X\right].
\label{eq:U_master_app}
\end{equation}
Start from \eqref{eq:U_step_identity_app}. Decompose
$y^{(m)}-x_L=(x_S-x_L)+(y^{(m)}-x_S)$.
Conditioned on $(x_S,x_L)$, $\mathbb{E}[y^{(m)}-x_S\mid x_S]=0$, so the cross term vanishes:
\[
\mathbb{E}\|y^{(m)}-x_L\|_2^2
=
\mathbb{E}\|x_S-x_L\|_2^2 + \mathbb{E}\|y^{(m)}-x_S\|_2^2.
\]
The first term is the Soft contribution \eqref{eq:U_soft_V_app}, and \eqref{eq:empirical_var_norm_app}
gives the second term.

\smallskip
\noindent\textit{Closed form in $(U,V)$.}
Using \eqref{eq:V_q_U_app}, so that $q = U+V/N$,
\begin{equation}
\mathbb{E}[\Delta U \mid X]_{\mathrm{top}m}
=
\frac{2\alpha^2}{N^2(N-1)}\,V
+
\frac{\alpha^2}{mN^2}\left(1-U-\frac{V}{N}\right).
\label{eq:U_topm_UV_app}
\end{equation}
At perfect symmetry ($x_i=\mathbf{1}/K$), $V=0$ and $U=1/K$, giving
$\mathbb{E}[\Delta U \mid X]_{\mathrm{top}m}=\frac{\alpha^2}{mN^2}(1-1/K)$.

\subsection{Disagreement drift and coordination drift}
\label{app:V_S_drift}

Let $V=\sum_i\|x_i-\bar{x}\|_2^2$. Under Top-$m$ empirical communication,
\begin{equation}
\mathbb{E}[\Delta V \mid X]
=
-\frac{2\alpha}{N-1}\left(1-\alpha+\frac{\alpha}{N}\right)\,V
+\frac{\alpha^2(N-1)}{mN}\left(1-U-\frac{V}{N}\right).
\label{eq:V_drift_app}
\end{equation}
Let $\Delta x \coloneqq x_L'-x_L=\alpha(y-x_L)$ and $\Delta\bar{x}=\Delta x/N$.
Write $\delta_i=x_i-\bar{x}$ so $\sum_i\delta_i=0$.
A direct expansion gives
$V' = V + 2\langle \delta_L,\Delta x\rangle + \frac{N-1}{N}\|\Delta x\|_2^2$.
For the linear term, condition on $(x_S,x_L)$ and use $\mathbb{E}[y\mid x_S]=x_S$ to replace
$\Delta x$ by $\alpha(x_S-x_L)$ in expectation. Averaging over uniformly sampled ordered pairs then gives the contraction contribution
$-\frac{2\alpha}{N-1}\left(1-\alpha+\frac{\alpha}{N}\right)V$.
For the quadratic term, decompose $\|y-x_L\|_2^2=\|x_S-x_L\|_2^2+\|y-x_S\|_2^2$ and apply
\eqref{eq:pairdist_app} and \eqref{eq:empirical_var_norm_app} to obtain the injection term.
Finally use $1-\mathbb{E}\|x_S\|_2^2 = 1-q = 1-U-V/N$.

\smallskip
\noindent\textit{Soft limit.}
Under Soft exchange ($m=\infty$), the injection term vanishes and
\begin{equation}
\mathbb{E}[\Delta V \mid X]_{\mathrm{soft}}
=
-\frac{2\alpha}{N-1}\left(1-\alpha+\frac{\alpha}{N}\right)\,V
\le 0.
\end{equation}

\smallskip
\noindent\textit{Coordination drift.}
Using \eqref{eq:S_U_V_app} and the drift formulas above,
\begin{equation}
\mathbb{E}[\Delta S \mid X]
=
\frac{2\alpha}{N(N-1)^2}\,V
\;\ge\; 0.
\label{eq:S_drift_app}
\end{equation}
In particular, $\mathbb{E}[\Delta S \mid X]$ is nonnegative and depends on $m$ only through the evolution of $V$.

\subsection{Homogeneous mean-field closure and consensus time scaling}
\label{app:meanfield}

A common closure assumes agents remain approximately homogeneous: $x_i\approx \bar{x}$, so $V\approx 0$ and $q\approx U$.
Then \eqref{eq:U_topm_UV_app} yields the per-step approximation
\begin{equation}
\mathbb{E}[\Delta U \mid X] \approx \frac{\alpha^2}{mN^2}(1-U).
\label{eq:mf_step_app}
\end{equation}
Measuring time in population rounds $\tau=t/N$ and treating the dynamics continuously gives
\begin{equation}
\frac{dU}{d\tau} \approx \frac{\alpha^2}{mN}(1-U),
\qquad
U(\tau)\approx 1-(1-U_0)\exp\!\left(-\frac{\alpha^2}{mN}\tau\right).
\label{eq:mf_solution_app}
\end{equation}
Thus the characteristic timescale in population rounds scales as
$\tau_{\mathrm{cons}}\sim \frac{mN}{\alpha^2}$.

\subsection[Mean-field comparison at large alpha]{Mean-field comparison at large $\alpha$}
\label{app:meanfield_alpha}

The mean-field approximation~\eqref{eq:mf_solution_app} is derived as a continuous approximation to the discrete QSG dynamics.
This approximation is most accurate when the adaptation rate $\alpha$ is small, so that each update produces an infinitesimal change.
At $\alpha=1$ (the naming-game limit where the listener is completely overwritten), the discrete dynamics dominate.

\paragraph{Why does simulation exceed theory at large $\alpha$?}
At small $\alpha$, the continuous ODE approximation is accurate, and a secondary effect becomes visible: agent heterogeneity.
By Jensen's inequality, $\mathbb{E}[\|x_S\|_2^2] \geq \|\bar{x}\|_2^2 = U$, so the actual drift term $(1-\mathbb{E}[\|x_S\|_2^2])$ is smaller than the mean-field approximation $(1-U)$.
This causes simulation to sit \emph{below} the theory curve (Fig.~\ref{fig:meanfield_validation}).

At large $\alpha$, however, the discrete dynamics dominate.
With $\alpha=1$, listeners snap to one-hot vectors after a single interaction.
These large jumps produce faster polarization than the smooth exponential predicted by the ODE, causing simulation to sit \emph{above} the theory curve.

\paragraph{Implication.}
At small $\alpha$, agent heterogeneity makes the effective drift smaller than the homogeneous mean-field approximation, so the mean-field curve typically \emph{overestimates} polarization speed (and thus \emph{underestimates} time to reach a fixed consensus threshold).
At large $\alpha$, discrete one-hot overwrites accelerate polarization relative to the smooth ODE, so the mean-field curve typically \emph{underestimates} polarization speed (and thus \emph{overestimates} time to threshold).

\subsection[Tempered sampling and the crossover parameter Gamma T]{Tempered sampling and the crossover parameter $\Gamma_T$}
\label{app:Gamma}

For the tempered-sampling extension used in Fig.~\ref{fig:luckiest_vs_fittest}, we define the temperature transform
\begin{equation}
g_T(x)_k \propto x_k^{1/T},
\qquad \sum_{k=1}^K g_T(x)_k = 1,
\label{eq:tempering_app}
\end{equation}
and generate messages from $g_T(x_S)$ instead of $x_S$ (Hard/Top-$m$).
Then $\mathbb{E}[y\mid x_S]=g_T(x_S)$, and the mean-field dynamics of $\bar{x}$ becomes
\begin{equation}
\frac{d\bar{x}}{d\tau} \approx \alpha\big(g_T(\bar{x})-\bar{x}\big).
\label{eq:mf_mean_tempered_app}
\end{equation}

Linearizing around the symmetric point $u=\mathbf{1}/K$ with $\bar{x}=u+\delta$ and $\sum_k\delta_k=0$ gives
\begin{equation}
g_T(u+\delta)=u+\frac{1}{T}\delta+O(\|\delta\|_2^2),
\qquad
\Rightarrow\qquad
\frac{d\delta}{d\tau}\approx \alpha\left(\frac{1}{T}-1\right)\delta.
\label{eq:linearization_T_app}
\end{equation}
Thus $T<1$ deterministically amplifies small asymmetries while $T>1$ damps them.

Quantized communication injects sampling-driven polarization at a characteristic population-round rate scale $\sim \alpha^2/(mN)$
(cf.\ \eqref{eq:mf_solution_app} near symmetry).
Comparing the deterministic linear rate $\alpha|1/T-1|$ to the quantization-driven scale $\alpha^2/(mN)$ motivates
the dimensionless crossover parameter
\begin{equation}
\Gamma_T \;\coloneqq\; \frac{mN}{\alpha}\left|\frac{1}{T}-1\right|.
\label{eq:Gamma_app}
\end{equation}
We interpret $\Gamma_T\approx 1$ as a \emph{finite-size crossover} between near-neutral (drift-dominated) and tempering-dominated regimes,
not as a literal thermodynamic phase transition of the finite-$N$ absorbing chain.

\subsection[Relating U to entropy and magnetization via a one-vs-rest ansatz]{Relating $U$ to entropy and magnetization via a one-vs-rest ansatz}
\label{app:U_to_HM}

To connect second-moment theory to entropy/magnetization plots, a useful approximation is the one-vs-rest ansatz
\[
\bar{x}\approx \left(p,\frac{1-p}{K-1},\dots,\frac{1-p}{K-1}\right),
\qquad p=\max_k \bar{x}_k \in[1/K,1].
\]
Under this ansatz,
\begin{equation}
U = p^2 + \frac{(1-p)^2}{K-1},
\qquad
p(U)=\frac{1+\sqrt{(K-1)(KU-1)}}{K}.
\label{eq:p_of_U_app}
\end{equation}
The magnetization $M=\frac{Kp-1}{K-1}$ becomes
\begin{equation}
M(U)=\sqrt{\frac{KU-1}{K-1}}.
\label{eq:M_of_U_app}
\end{equation}
The normalized entropy $H(\bar{x})=-\frac{1}{\log K}\sum_k \bar{x}_k\log(\bar{x}_k+\varepsilon)$ yields
\begin{equation}
H(U)= -\frac{1}{\log K}\left[
p(U)\log p(U) + (1-p(U))\log\!\left(\frac{1-p(U)}{K-1}\right)
\right],
\label{eq:H_of_U_app}
\end{equation}
where $\varepsilon$ is used only for numerical stability in implementations.

\section{Experiments}
\label{app:experiments_overview}

\subsection{Numerical simulations of QSG}
\label{app:toy_sims}
We report QSG simulations comparing Soft, Hard, and Top-$m$ dynamics across symmetry breaking, temperature sweeps, and entropy trajectories. The one-step drift identity test in Fig.~\ref{fig:thm2_validation} is reported in the main text; here we focus on additional dynamics and robustness sweeps.

\subsubsection{Setup}
Unless stated otherwise:
\begin{itemize}[leftmargin=1.2em]
\item Population size $N=24$, vocabulary size $K=10$ (matching a common experimental setting in \cite{Ashery2025SciAdv}).
\item Soft update: $x_L\leftarrow (1-\alpha)x_L+\alpha x_S$.
\item Hard update: $k^\star\sim\mathrm{Cat}(x_S)$ and $x_L\leftarrow (1-\alpha)x_L+\alpha e_{k^\star}$.
\item Top-$m$ update: sample $m$ tokens i.i.d.\ with replacement and update toward their empirical distribution.
\item We track coordination rate $S(t)$ (Eq.~\eqref{eq:S_def_app}) and entropy $H(t)$.
\end{itemize}

\subsubsection{Simulation suite}
We run four experiments: (i) Soft versus Hard dynamics from the symmetric initialization $x_i(0)=\mathbf{1}/K$,
(ii) a softmax-temperature sweep for Hard sampling, showing a finite-horizon crossover near $T\approx 1$,
(iii) a comparison of Hard, Soft, and Top-$m$ updates from symmetric initialization, where entropy remains high for $m>1$ while Hard collapses,
and (iv) the direct Monte Carlo test of Theorem~\ref{thm:topm_scaling} reported in Fig.~\ref{fig:thm2_validation}.

\subsubsection{Method summary and reproducibility}
\label{app:experiments}
We simulate QSG directly in probability space with Soft, Hard, and Top-$m$ updates as defined in the main text.
The Top-$m$ message uses the empirical distribution from $m$ independent samples with replacement.
The Monte Carlo estimator samples a random speaker and listener, computes the one-step change in $U=\|\bar{x}\|_2^2$, and averages over repeated draws.
Unless noted, we use $N=24$, $K=10$, and the $\alpha$ listed in each figure; $m$ ranges over $\{1,2,3,5,10,20\}$.
Code uses NumPy with fixed seeds.

\subsection{LLM population experiments}
\label{app:llm_population}
LLM population experiments using the Neutral Naming Drift (NND) protocol are reported in Sec.~\ref{sec:experiments} and summarized in Fig.~\ref{fig:theory_vs_llm}.
Across all LLM runs, we use delayed-reveal interactions with memory size $H=10$, synthetic 5-character labels, a fixed referent string, and no external reward.
Label order is shuffled each interaction, probes are measurement-only, and the consensus threshold is $0.9$.
Prompt structure and decoding are fixed across runs: the system message enforces valid JSON output over the allowed labels, and the speaker/listener prompts specify the referent, allowed labels, and the agent's memory of past interactions.
Fig.~\ref{fig:prompt_example} shows a concrete example.

\begin{figure}[t]
\centering
\small
\begin{tcolorbox}[colback=black!3, colframe=black!30, boxrule=0.4pt, arc=2pt,
  left=6pt, right=6pt, top=5pt, bottom=5pt, title={\footnotesize\textbf{System message}}, fonttitle=\sffamily]
\raggedright\ttfamily\scriptsize
You must output only valid JSON. No extra keys, no explanations, no markdown.\\
Valid labels are exactly those in Allowed labels. Never output "<PAD>".
\end{tcolorbox}
\vspace{-4pt}
\begin{tcolorbox}[colback=black!3, colframe=black!30, boxrule=0.4pt, arc=2pt,
  left=6pt, right=6pt, top=5pt, bottom=5pt, title={\footnotesize\textbf{User message (speaker)}}, fonttitle=\sffamily]
\raggedright\ttfamily\scriptsize
Referent: ref\_07\\
Allowed labels: ["hdsad", "vokhg", "fmhja"]\\
The list order is randomized and has no meaning.\\
Both players are choosing a label for the same referent in repeated interactions. The memory shows labels you observed from previous interactions with partners.\\
Memory (last H observed messages, oldest -> newest, padded with "<PAD>"): ["<PAD>", "<PAD>", "<PAD>", "<PAD>", "<PAD>", "<PAD>", "<PAD>", "fmhja", "hdsad", "vokhg"]\\
Each memory entry is a label string.\\[4pt]
Constraints:\\
- Output JSON only.\\
- Every label must be from Allowed labels.\\[4pt]
Output JSON exactly: \mbox{\texttt{\detokenize{{"label": "<label>"}}}}
\end{tcolorbox}
\caption{\textbf{Example speaker prompt (Hard, $m=1$, $K=3$, $H=10$).}
Labels are synthetic 5-character strings; memory is zero-padded with \texttt{<PAD>} tokens.
Under the delayed-reveal protocol used in all experiments, the listener prompt follows the same basic structure. The listener does not observe the speaker's current output but updates its memory with the speaker's label after responding.}
\label{fig:prompt_example}
\end{figure}

For the $K=3$ N-sweeps in Fig.~\ref{fig:theory_vs_llm} (GPT-4o and Claude Haiku 4.5), we use
$N\in\{2,3,4,6,8,12,16,24,32\}$,
$m=1$ (Hard),
and horizon $T=30N$ interactions.
Probes are taken once per population round, with one probe sample per agent.
Each $(\text{model},N)$ point aggregates 5 trials.

For the $K=10$ Top-$m$ sweep in Fig.~\ref{fig:m_sweep_llm_scaling}, we use GPT-4o with
$N=4$, horizon $t_{\max}=120$ interaction steps, and $m\in\{1,2,3,4\}$ (10 trials each).
\label{app:topm_llm_details}
Early drift is estimated from the initial linear regime using measurement-only probes at each interaction step; error bars in Fig.~\ref{fig:m_sweep_llm_scaling} show s.e.m.

\end{document}